\newif\ifignore 
\newcommand{\auxproof}[1]{
\ifignore\mbox{}\newline
\textbf{PROOF:} \dotfill\newline
{\it #1}\mbox{}\newline
\textbf{ENDPROOF}\dotfill
\fi}
\renewcommand{\arraystretch}{1.3}
\newcommand{\QEDbox}{\square}
\newcommand{\QED}{\hspace*{\fill}$\QEDbox$}
\newcommand{\mathoverlap}[2]{\mathpalette\mathoverlap@{{#1}{#2}}}
\newcommand{\mathoverlap@}[2]{\mathoverlap@@{#1}#2}
\newcommand{\mathoverlap@@}[3]{\ooalign{$\m@th#1#2$\crcr\hidewidth$\m@th#1#3$\hidewidth}}
\newcommand{\klafter}{\mathrel{\bullet}}
\DeclareSymbolFont{T1op}{T1}{cmr}{m}{n}
\DeclareMathSymbol{\mathguilsinglleft}{\mathopen}{T1op}{'016}
\DeclareMathSymbol{\mathguilsinglright}{\mathclose}{T1op}{'017}
\newcommand{\idmap}[1][]{\ensuremath{\mathrm{id}_{#1}}}
\newcommand{\after}{\mathrel{\circ}}
\newcommand{\st}{\ensuremath{\mathrm{st}}}
\newcommand{\ust}{\ensuremath{\mathrm{ust}}}
\newcommand{\dst}{\ensuremath{\mathrm{dst}}}
\newcommand{\dis}{\ensuremath{\mathrm{dis}}}
\newcommand{\row}{\ensuremath{\mathrm{row}}}
\newcommand{\PDF}{\ensuremath{\mathrm{PDF}}}
\newcommand{\Dir}{\ensuremath{\mathrm{Dir}}}
\newcommand{\set}[2]{\{#1\;|\;#2\}}
\newcommand{\setin}[3]{\{#1\in#2\;|\;#3\}}
\newcommand{\supp}{\mathrm{supp}}
\newcommand{\tuple}[1]{\langle#1\rangle}
\newcommand{\ket}[1]{\ensuremath{|{\kern.1em}#1{\kern.1em}\rangle}}
\newcommand{\bigket}[1]{\ensuremath{\big|{\kern.1em}#1{\kern.1em}\big\rangle}}
\newcommand{\incr}[2]{#1\!\mathrel{\mathrm{+{\kern-.1em}+}}\!#2}
\newcommand{\powersetsymbol}{\mathcal{P}}
\newcommand{\distributionsymbol}{\mathcal{D}}
\newcommand{\multisetsymbol}{\mathcal{M}}
\newcommand{\Pow}{\powersetsymbol}
\newcommand{\Dst}{\distributionsymbol}
\newcommand{\aneDst}{\distributionsymbol_{\circledast}}
\newcommand{\Mlt}{\multisetsymbol}
\newcommand{\neMlt}{\multisetsymbol_{*}}
\newcommand{\sneMlt}{\multisetsymbol_{*_2}\!}
\newcommand{\aneMlt}{\multisetsymbol_{\circledast}}
\newcommand{\Giry}{\mathcal{G}}
\newcommand{\UF}{\ensuremath{\mathcal{U}{\kern-.75ex}\mathcal{F}}}
\newcommand{\Cat}[1]{\ensuremath{\mathbf{#1}}\xspace}
\newcommand{\Kl}{\mathcal{K}{\kern-.4ex}\ell}
\newcommand{\KlN}{\Kl_{\mathbb{N}}}
\newcommand{\EM}{\mathcal{E}{\kern-.4ex}\mathcal{M}}
\newcommand{\Sets}{\Cat{Sets}}
\newcommand{\Nat}{\Cat{Nat}}
\newcommand{\NNO}{\mathbb{N}}
\newcommand{\pNNO}{\mathbb{N}_{>0}}
\newcommand{\R}{\mathbb{R}}
\newcommand{\nnR}{\R_{\geq 0}}
\newcommand{\Ef}{\ensuremath{\mathcal{E}{\kern-.5ex}f}}
\newcommand{\intd}{{\kern.2em}\mathrm{d}{\kern.03em}}
\newcommand{\indic}[1]{\mathbf{1}_{#1}}
\newcommand{\OF}{\ensuremath{\mathcal{O}{\kern-.1em}\mathcal{F}}}
\newcommand{\Closed}{\ensuremath{\mathcal{C}{\kern-.05em}\ell}}
\newcommand{\margsign}{\mathsf{M}}
\newcommand{\fM}{\ensuremath{\margsign_{1}}}
\newcommand{\sM}{\ensuremath{\margsign_{2}}}
\newsavebox\sbpto
\savebox\sbpto{\begin{tikzpicture}[baseline=-2.5pt]
            \filldraw[fill=white,draw=white] circle (1.4pt);
            \filldraw[fill=white,draw=black,line width=0.2pt] circle
(1.2pt);
                \end{tikzpicture}}
\newcommand\pto{\mathrel{\ooalign{$\to$\cr
            \hfil\!$\usebox\sbpto$\hfil\cr}}}
\newcommand\kto[2]{#1 \pto #2}
\newsavebox\sbground
\savebox\sbground{\begin{tikzpicture}[circuit ee IEC,yscale=0.5,xscale=0.4]
                \draw (0,-2ex) to (0,0) node[ground,rotate=90,xshift=.65ex] {};
                \end{tikzpicture}}
\newcommand{\ie}{\textit{i.e.}\xspace}
\DeclareFixedFont{\ttb}{T1}{txtt}{bx}{n}{11} 
\DeclareFixedFont{\ttm}{T1}{txtt}{m}{n}{11}  
\definecolor{deepblue}{rgb}{0,0,0.5}
\definecolor{deepred}{rgb}{0.6,0,0}
\definecolor{deepgreen}{rgb}{0,0.5,0}
\definecolor{lightgray}{rgb}{0.83,0.83,0.83}
\newcommand\pythonstyle{\lstset{
backgroundcolor = \color{lightgray},
language=Python,
basicstyle=\ttm,
otherkeywords={self,>>>,...},             
keywordstyle=\ttb\color{deepblue},
emph={MyClass,__init__},          
emphstyle=\ttb\color{deepred},    
stringstyle=\color{deepgreen},
frame=tb,                         
showstringspaces=false            %
}}
\newcommand\pythoninline[1]{{\pythonstyle\lstinline!#1!}}
\begin{document}
\begin{frontmatter}
\title{Categorical Aspects of Parameter Learning}


\author{Bart Jacobs}

\institute{Institute for Computing and Information Sciences,
\\ 
Radboud University, Nijmegen, The Netherlands
\\ 
\email{bart@cs.ru.nl}
\\[+.5em]
\today
}

\maketitle

\begin{abstract}
Parameter learning is the technique for obtaining the probabilistic
parameters in conditional probability tables in Bayesian networks from
tables with (observed) data --- where it is assumed that the
underlying graphical structure is known. There are basically two ways
of doing so, referred to as maximal likelihood estimation (MLE) and as
Bayesian learning. This paper provides a categorical analysis of these
two techniques and describes them in terms of basic properties of the
multiset monad $\Mlt$, the distribution monad $\Dst$ and the Giry
monad $\Giry$. In essence, learning is about the reltionships between
multisets (used for counting) on the one hand and probability
distributions on the other. These relationsips will be described as
suitable natural transformations.
\end{abstract}
\end{frontmatter}


\renewcommand{\thepage}{\arabic{page}}

\section{Introduction}\label{IntroSec}

Bayesian networks are graphical models for efficiently organising
probabilistic
information~\cite{Barber12,BernardoS00,Darwiche09,JensenN07,KollerF09,Pearl88}. These
models can be used for probabilistic reasoning (inference), where the
probability of an observation is inferred from certain evidence. These
techniques are extremely useful, for instance in a medical setting,
where symptoms and measurements can be used as evidence, and the
inferred probability can help a doctor reach a decision.

A basic question is how to obtain accurate Bayesian networks. This
question involves two parts: how to determine the underlying graph
structure, and how to obtain the probabilities in the conditional
probability tables (CPTs) of the network. The first part is called
\emph{structure learning}, and the second part is called
\emph{parameter learning}. Here we concentrate on the latter,
especially for discrete probability distributions.

One way of obtaining the parameters of Bayesian network is to learn
them from experts. However, it is more efficient and cheaper to learn
the parameters from data, if available.  The data is typically
organised in (very large) tables, with information of the form: so
many patients had medicine $A$ and had these symptoms and so many
patients got medicine $B$ and showed such symptoms, etc.  Below we
shall describe such tables, say with $n$ dimensions, as $n$-ary
multisets in $\Mlt(X_{1}\times \cdots \times X_{n})$, where $\Mlt$ is
the multiset monad on the category of sets.

A multiset is a `set' in which elements may occur multiple times. We
shall write such multisets as formal combinations of the form
$3\ket{a} + 5\ket{b} + 2\ket{c}$. This expresses that we have $3$
occurrences of the element $a$, $5$ times $b$, and $2$ times $c$. Via
normalisation we can turn such a multiset into a probability
distribution, namely $\frac{3}{10}\ket{a} + \frac{5}{10}\ket{b} +
\frac{2}{10}\ket{c}$. Now the parameters add up to one, via division
by the sum of all occurrences. Below we shall describe this operation
as a natural transformation of the form $\neMlt \Rightarrow \Dst$,
where $\neMlt$ is the (sub)monad of \emph{non-empty} multisets, and
where $\Dst$ is the discrete probability distribution monad.  Various
properties of this natural transformation are identified, for instance
with respect to marginalisation and disintegration. These properties
are relevant for using this learning method with repect to a graph
structure.

The above learning technique extracts probability distributions
directly from the data, reformulated here via the transformation
$\neMlt \Rightarrow \Dst$. It is called \emph{maximal likelihood
  estimation} (MLE), see \textit{e.g.}~\cite[Ch.17]{Darwiche09},
\cite[\S17.1]{KollerF09} or~\cite[\S6.1.1]{JensenN07}. We like to see
it as `frequentist' learning, since it is based on counting and
frequencies. There is a more sophisticated form of learning called
\emph{Bayesian learning}, see~\cite[Ch.18]{Darwiche09},
\cite[\S17.3]{KollerF09} or~\cite[\S6.1.2]{JensenN07}. It is a form of
higher order learning, where one does not immediately obtain the
probability distribution in $\Dst(X)$, for a finite set $X$, but one
obtains a \emph{distribution over} $\Dst(X)$. The latter is a
continuous distribution, defined on a simplex, and thus involves the
Giry monad $\Giry$ on measurable spaces. Here we show how to
reformulate Bayesian learning in terms of another transformation, of
the form $\aneMlt \Rightarrow \Giry\Dst$, where $\aneMlt$ is the
(sub)monad of multisets in which each element occurs at least
once. The transformation is given by Dirichlet distributions. We show
how familiar properties of Dirichlet distributions translate into
categorical properties.

This paper thus gives a novel, snappy, categorical perspective on
parameter learning in terms of two natural transformations:
\[ \xymatrix{
\neMlt \ar@{=>}[rr]^{\text{frequentist}} & & \Dst
& \mbox{\quad and \quad} &
\aneMlt \ar@{=>}[rr]^{\text{Bayesian}} & & \Giry\Dst.
} \]

\noindent These transformations capture fundamental relationships
between the multiset monad $\Mlt$ on the one hand and probability
distribution monads $\Dst$ and $\Giry$ on the other hand. In the
practice of Bayesian networks, the differences between the frequentist
and Bayesian learning methods are substantial, for instance wrt.\ to
incorporating priors, variance, or zero counts, see
\textit{e.g.}~\cite[\S17.3]{KollerF09}. However, these differences are
not relevant in our abstract characterisations.

The topic of parameter learning is textbook material. The contribution
of this paper lies in a systematic, categorical reformulation that
offers a novel perspective --- not only in terms of natural
transformation as above, but also via conditioning, see
Section~\ref{LogicSec} --- that may lead to a deeper understanding and
to new connections. This reformulation involves a level of
mathematical precision that may be useful for (the semantics of)
probabilistic programming languages.




\section{Preliminaries on tables and distributions}\label{PrelimSec}

This section will elaborate a simple example in order to provide
background information about the setting in which parameter learning
is used.




Consider the table~\eqref{TableEqn} below where we have combined
numeric information about blood pressure (either high $H$ or low $L$)
and certain medicines (either type $1$ or type $2$ or no medicine,
indicated as $0$). There is data about 100 study participants:
\begin{equation}
\label{TableEqn}
\hbox{\begin{tabular}{r||c|c|c|c}
 & \textbf{ no medicine } & \textbf{ medicine 1 } & \textbf{ medicine 2 } 
    & \textbf{ totals }
\\
\hline\hline
\textbf{ high } & 10 & 35 & 25 & 70
\\
\textbf{ low } & 5 & 10 & 15 & 30
\\
\hline
\textbf{ totals } & 15 & 45 & 40 & 100
\end{tabular}}
\end{equation} 

\noindent We consider several ways to `learn' from this table.

\smallskip

\noindent \textbf{(1)} We can form the cartesian product $\{H,T\}
\times \{0,1,2\}$ of the possible outcomes and then capture the above
table as a multiset over this product:
\[ 10\ket{H,0} + 35\ket{H,1} + 25\ket{H,2} + 5\ket{L,0} + 10\ket{L,1} + 
  15\ket{L,2}. \]

\noindent We can normalise this multiset. It yields a joint
probability distribution, which we call $\omega$, for later
reference:
\[ \begin{array}{rcl}
\omega
& \coloneqq &
0.10\ket{H,0} \!+\! 0.35\ket{H,1} \!+\! 0.25\ket{H,2} \!+\!
   0.05\ket{L,0} \!+\! 0.10\ket{L,1} \!+\! 0.15\ket{L,2}
\end{array} \]

\noindent Such a distribution, directly derived from a table, is
sometimes called an \emph{empirical} distribution~\cite{Darwiche09}.

\smallskip

\noindent \textbf{(2)} The first and second marginals $\fM(\omega)$
and $\sM(\omega)$ of this joint probability distribution $\omega$
capture the blood pressure probabilities and the medicine
probabilities separately, as: $\fM(\omega) = 0.7\ket{H} + 0.3\ket{L}$
and $\sM(\omega) = 0.15\ket{0} + 0.45\ket{1} + 0.4\ket{2}$. These
marginal distributions can also be obtained directly from the above
table~\eqref{TableEqn}, via the normalisation of last column and last
row. This fact looks like a triviality, but involves a naturality
property (see Lemma~\ref{FrequentistNatLem} below).

\smallskip

\begin{wrapfigure}{r}{0pt}
$\vcenter{\xymatrix@R-1.5pc@C-1pc{
\ovalbox{\strut Blood}\ar[d] &
{\setlength\tabcolsep{0.3em}
   \renewcommand{\arraystretch}{1}
\begin{tabular}{|c|c|}
\hline
H & L \\
\hline\hline
$0.7$ & $0.3$ \\
\hline
\end{tabular}}
\\
\ovalbox{\strut Medicine}
&
{\setlength\tabcolsep{0.3em}
   \renewcommand{\arraystretch}{1}
\begin{tabular}{|c|c|c|c|}
\hline
& 0 & 1 & 2 \\
\hline\hline
$H$ & $\nicefrac{1}{7}$ & $\nicefrac{1}{2}$ & $\nicefrac{5}{14}$ \\
\hline
$L$ & $\nicefrac{1}{6}$ & $\nicefrac{1}{3}$ & $\nicefrac{1}{2}$ \\
\hline
\end{tabular}}
}}\quad
$
\end{wrapfigure}
\noindent \textbf{(3)} Next we wish to use the above
table~\eqref{TableEqn} to learn the parameters (table entries) for the
simple Bayesian network on the right. We then need to fill in the
associated conditional probability tables. These entries are obtained
from the last column in Table~\ref{TableEqn}, for the initial blood
distribution $0.7\ket{H} + 0.3\ket{L}$, and from the two rows in the
table; the latter yield two distributions for medicine usage, via
normalisation.

\smallskip

\noindent \textbf{(4)} In the categorical look at Bayesian networks
(see \textit{e.g.}~\cite{JacobsZ16,JacobsZ18}) these conditional
probability tables correspond to \emph{channels}: Kleisli maps for the
distribution monad $\Dst$. In the above case, the channel $c \colon
\kto{\{H,T\}}{\{0,1,2\}}$ corresponding to the medicine table
in the previous point is:
\[ \begin{array}{rclcrcl}
c(H)
& = &
\nicefrac{1}{7}\ket{0} + \nicefrac{1}{2}\ket{1} + \nicefrac{5}{14}\ket{2}
& \qquad &
c(L)
& = &
\nicefrac{1}{6}\ket{0} + \nicefrac{1}{3}\ket{1} + \nicefrac{1}{2}\ket{2}.
\end{array} \]

\noindent We can then recover the second marginal $\sM(\omega)$ as
state transformation $c \gg \fM(\omega)$, see later for details.

\smallskip

\noindent \textbf{(5)} Given a joint distribution $P(x,y)$ there is a
standard way to extract a channel $P(y\mid x)$ by taking conditional
probabilities.  This process is often called \emph{disintegration},
and is studied systematically in~\cite{ChoJ17a} (and in many other
places). If we disintegrate the above distribution $\omega$ on the
product $\{H,T\}\times\{0,1,2\}$ we obtain as channel
$\kto{\{H,T\}}{\{0,1,2\}}$ precisely the map $c$ from the previous
point --- obtained in point~\textbf{(3)} directly via the
Table~\eqref{TableEqn}. This is a highly relevant property, which
essentially means that (this kind of) learning can be done locally ---
see Proposition~\ref{FrequentistTableProp} below.

\smallskip

This example illustrates how probabilistic information can be
extracted from a table with numeric data --- in a frequentist manner
--- essentially by counting. This process will be analysed from a
systematic categorical perspective in Section~\ref{FrequentistSec}
below. The resulting structure will be useful in the subsequent more
advanced form of Bayesian learning, see Section~\ref{BayesianSec},
where continuous (Dirichlet) distributions are used on the
probabilistic parameters $r_{i}\in[0,1]$ in convex combinations
$\sum_{i}r_{i}\ket{i}$. But first we need to be more explicit about
the basic notions and notations that we use in our analysis.

\section{Prerequisites on multisets and discrete probability}\label{DiscPrereqSec}

Categorically, (finite) multisets can be captured via a the
\emph{multiset monad} $\Mlt$ on the category $\Sets$. For a set $X$
there is a new set $\Mlt(X) = \set{\phi\colon X \rightarrow
  \NNO}{\supp(\phi) \mbox{ is finite}}$ of multisets of $X$. The
support $\supp(\phi)$ is of a multiset $\phi$ is the subset
$\supp(\phi) = \setin{x}{X}{\phi(x) \neq 0}$ of its inhabitants. We
often write $\phi\in\Mlt(X)$ as formal finite sum $\phi =
\sum_{i}\alpha_{i}\ket{x_i}$, with support $\supp(\phi) = \{x_{1},
\ldots, x_{n}\}$ and $\phi(x_{i}) = \alpha_{i}\in\NNO$ telling how
often $x_{i}\in X$ occurs in the multiset $\phi$. The ket notation
$\ket{-}$ is meaningless syntactic sugar.


Each function $h\colon X \rightarrow Y$ gives rise to a function
$\Mlt(h) \colon \Mlt(X) \rightarrow \Mlt(Y)$ between the corresponding
collections of multisets. One defines $\Mlt(h)$ as:
\begin{equation}
\label{FunctorialEqn}
\hspace*{-0.5em}\begin{array}{rclcrcl}
\Mlt(h)(\phi)(y)
& = &
\!\!\displaystyle\sum_{x\in h^{-1}(y)} \phi(x)
& \quad\mbox{or as}\quad &
\Mlt(h)\big(\sum_{i}\alpha_{i}\ket{x_i}\big)
& = &
\sum_{i} \alpha_{i}\ket{h(x_{i})}.
\end{array}
\end{equation}

\noindent We do not need the monad structure of $\Mlt$ in this paper.
But functoriality --- the fact that $\Mlt$ not only acts on sets but
also on maps between them --- plays an important role. For instance,
the column and row of totals in Table~\eqref{TableEqn} are obtained as
`marginalisations' $\Mlt(\pi_{1})(\psi)\in\Mlt(\{H,T\})$ and
$\Mlt(\pi_{2})(\psi)\in\Mlt(\{0,1,2\})$ for the projection functions
$\{H,T\} \smash{\stackrel{\pi_1}{\longleftarrow}}
\{H,T\}\times\{0,1,2\} \smash{\stackrel{\pi_2}{\longrightarrow}}
\{0,1,2\}$.


Discrete probability distributions --- also called multinomials or
categorical distributions --- can be seen as special kinds of
multisets, not with natural numbers as multiplicities, but with
probabilities in $[0,1]$, with the additional requirement that these
probabilities add up to one (and thus form what is called a convex
combination). We write $\Dst(X)$ for the set of such discrete
probability distributions on $X$. It is defined as a set of
probability mass functions:
\begin{equation}
\label{DstEqn}
\begin{array}{rcl}
\Dst(X)
& \coloneqq &
\set{\varphi\colon X \rightarrow [0,1]}{\supp(\phi) \mbox{ is finite, and }
   \sum_{x}\phi(x) = 1}.
\end{array}
\end{equation}

\noindent An element of $\Dst(X)$ is often simply called a
distribution (or also a state) and is written as formal convex
combination $\sum_{i}r_{i}\ket{x_i}$, with $\sum_{i}r_{i} = 1$. On
$h\colon X\rightarrow Y$, a map $\Dst(h) \colon \Dst(X) \rightarrow
\Dst(Y)$ is defined essentially as in~\eqref{FunctorialEqn}. For a
projection $\pi_{1} \colon X\times Y \rightarrow X$ the associated
mapping $\Dst(\pi_{1}) \colon \Dst(X\times Y) \rightarrow \Dst(Y)$
performs marginalisation --- which we have written as $\fM$ in
point~\textbf{(2)} in Section~\ref{PrelimSec}. For more information
about the monads $\Mlt$ and $\Dst$ we refer to~\cite{Jacobs16g}.


We often identify a natural number $n$ with the $n$-element set
$\{1,2,\ldots,n\}$. In this way we write $\Nat$ for the category with
natural numbers $n$ as objects and with functions $n\rightarrow m$
between them. Thus there is a full and faithful functor
$\Nat\hookrightarrow\Sets$. This category $\Nat$ has finite products,
with final object $1$ and binary product $n\times m$ given by
multiplication of numbers.

We mostly apply the above functors $\Mlt, \Dst$ to $n\in\Nat$, as
sets. Then:
\begin{itemize}
\item $\Mlt(n) \cong \NNO^{n}$, via $\sum_{i}\alpha_{i}\ket{i} \mapsto
  (\alpha_{1}, \ldots, \alpha_{n})$. We shall often (implicitly) use
  this isomorphism. So why use special notation $\Mlt(n)$ for
  $\NNO^{n}$? Because $\Mlt$ is a functor and its action $\Mlt(h)$ on
  a function $h$ is most useful.


\item $\Dst(n+1)$ can be identified with the $n$-simplex of $r_{0},
  \ldots, r_{n} \in [0,1]$ with $\sum_{i}r_{i} = 1$. Via this
  identification we consider $\Dst(n)$ a subset of $\R^{n+1}$, where
  $\Dst(1) \cong 1 = \{1\} \hookrightarrow \R^{1} \cong \R$.
\end{itemize}

\noindent Having seen the basics of $\Mlt$ and $\Dst$, we consider the
following variations.
\begin{enumerate}
\item We write $\neMlt(X) \hookrightarrow \Mlt(X)$ for the subset of
  \emph{non-empty} multisets, that is, of multisets with non-empty
  support.  More explicitly, $\neMlt(X)$ contains those multisets
  $\sum_{i}\alpha_{i}\ket{x_{i}}$ with $\alpha_{i} > 0$ for some index
  $i$; alternatively, the sum $\sum_{i}\alpha_{i}$ is
  non-zero. Distributions have non-empty support by definition.

\item We further write $\aneMlt(X) \hookrightarrow \neMlt(X)$ for the
  set of multisets $\sum_{i}\alpha_{i}\ket{x_{i}}$ with `full
  support', that is with $\supp(\phi) = X$. This means that $X =
  \{x_{1}, \ldots, x_{n}\}$ and $\alpha_{i} > 0$ for all $i$. Using
  $\aneMlt(X)$ only makes sense for finite sets $X$. We write
  $\aneDst(X) \hookrightarrow \Dst(X)$ for the subset of distribitions
  with full support.

Functoriality is a bit subtle for $\aneMlt$ and $\aneDst$. The
descriptions $\Mlt(h)$ only makes sense for \emph{surjective}
functions $h$.
\end{enumerate}

A \emph{channel} $c\colon\kto{X}{Y}$ is a function $c\colon X
\rightarrow \Dst(Y)$. It gives a probability distribution
$c(x)\in\Dst(Y)$ on $Y$ for each element $x\in X$. It captures the
idea of a \emph{conditional} probability distribution $p(y\mid x)$.
Given a distribution $\omega\in\Dst(X)$ on the domain $X$ of a channel
$c\colon \kto{X}{Y}$ we write $c \gg \omega \in \Dst(Y)$ for the
distribution on $Y$ that is obtained by `state transformation':
\[ \begin{array}{rcl}
(c \gg \omega)(y)
& \coloneqq &
\sum_{x} c(x)(y)\cdot \omega(x).
\end{array} \]

\noindent Given another channel $d \colon \kto{Y}{Z}$ we write $d
\klafter c \colon \kto{X}{Z}$ for the composite channel defined by
$(d\klafter c)(x) \coloneqq d \gg c(x)$.

From a joint distribution $\omega\in\Dst(X\times Y)$ one can extract
a channel $c\colon \kto{X}{Y}$,
\begin{equation}
\label{DisintegrationEqn}
\begin{array}{rcccl}
c(x)(y)
& \coloneqq &
\displaystyle\frac{\omega(x,y)}{\fM(\omega)(x)}
& = &
\displaystyle\frac{\omega(x,y)}{\sum_{y}\omega(x,y)}.
\end{array}
\end{equation}

\noindent This channel exists if $\fM(\omega)(x) > 0$ for each $x\in
X$, that is, if the first marginal $\fM(\omega)$ has full support,
\ie~is in $\aneDst(X)$. This extracted channel, if it exists, is
unique with the property $\omega = \tuple{\idmap, c} \gg \fM(\omega)$,
where $\tuple{\idmap,c} \colon \kto{X}{X\times Y}$ is the channel with
$\tuple{\idmap,c}(x) = \sum_{y} c(x)(y)\ket{x,y}$. We shall use
disintegration as a partial function $\dis \colon \Dst(X\times Y)
\rightarrow \Dst(Y)^{X}$. See~\cite{ClercDDG17,ChoJ17a} for more
information.


For a distribution $\omega\in\Dst(X)$ on a set $X$ and a (fuzzy)
predicate $p\colon X \rightarrow [0,1]$ on the same set $X$ we write
$\omega\models p$ for the validity (or expected value) of $p$ in
$\omega$. It is the number in $[0,1]$ defined as $\sum_{x\in X}
\omega(x)\cdot p(x)$. In case this validity is nonzero, we write
$\omega|_{p} \in\Dst(X)$ for the conditioned distribution, updated
with predicate $p$. It is defined as $\omega|_{p}(x) =
\frac{\omega(x)\cdot p(x)}{\omega\models p}$. For more details,
see~\cite{JacobsZ16,JacobsZ18,Jacobs17a,Jacobs15d}.

\section{Frequentist learning by counting}\label{FrequentistSec}

As mentioned in the introduction, maximal likelihood estimation (MLE)
is one kind of parameter learning, see
\textit{e.g.}~\cite{KollerF09,Darwiche09,JensenN07}. We reframe it
here as frequentist learning. Our categorical reformulation for
discrete probability distributions (multinomials) uses the non-empty
multiset functor $\neMlt$ and the distribution functor $\Dst$ from
Section~\ref{DiscPrereqSec}. It turns out that the process of
learning-by-counting involves some basic categorical structure: it is
a monoidal natural transformation, that can be applied locally.

\begin{definition}
\label{FrequentistDef}
For each $n\in\NNO$ we define (discrete) maximal likelihood estimation
as a function $\ell_{n} \colon \neMlt(n) \rightarrow \Dst(n)$,
determined by:
\begin{equation}
\label{FrequentistEqn}
\begin{array}{rclcrcl}
\ell_{n}(\alpha_{1}, \ldots, \alpha_{n})
& \coloneqq &
\frac{\alpha_{1}}{\alpha}\ket{1} + \cdots +
   \frac{\alpha_{n}}{\alpha}\ket{n}
& \quad\mbox{where}\quad &
\alpha 
& \coloneqq &
\sum_{i}\alpha_{i}.
\end{array}
\end{equation}
\end{definition}

The map $\ell$ turns numbers $\alpha_{i}\in\NNO$ of occurrences of
data items $i$ into a distribution, essentially by normalisation, as
we have seen earlier for table~\eqref{TableEqn}. The
distribution~\eqref{FrequentistEqn} yields a maximum for a likelihood
function on distributions, see
Remark~\ref{FrequentistRem}~\eqref{FrequentistRemName} below. Here we
are interested in its categorical properties.


\begin{lemma}
\label{FrequentistNatLem}
The maps $\ell_{n} \colon \neMlt(n) \rightarrow \Dst(n)$ form a
natural transformation as on the left below.
\vspace*{-1em}
\[ \vcenter{\xymatrix@C+2pc{
\Nat\rtwocell^{\neMlt}_{\Dst}{\ell} & \Sets
}}
\hspace*{6em}
\vcenter{\xymatrix@C+1pc@R-1pc{
\neMlt(n_{1}\times n_{2})\ar[d]_{\ell_{n_{1}\times n_{2}}}\ar[r]^-{\neMlt(\pi_{i})}
  & \neMlt(n_{i})\ar[d]^{\ell_{n_i}}
\\
\Dst(n_{1}\times n_{2})\ar[r]^-{\Dst(\pi_{i})} & \Dst(n_{i})
}} \]

\noindent In particular, maximal likelihood estimation $\ell$ commutes
with marginalisations $\neMlt(\pi_{i})$ and $\Dst(\pi_{i})$, obtained
via projections $\pi_{i} \colon n_{1}\times n_{2}\rightarrow n_{i}$,
see the naturality diagram, above on the right.
\end{lemma}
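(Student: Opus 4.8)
The plan is to verify the naturality square directly by a diagram chase on a single multiset, for an arbitrary morphism $h \colon n_{1} \rightarrow n_{2}$ in $\Nat$, that is, an arbitrary function between the finite sets $\{1,\ldots,n_{1}\}$ and $\{1,\ldots,n_{2}\}$. The marginalisation statement on the right will then be an immediate instance: since $\Nat$ has finite products, each projection $\pi_{i} \colon n_{1}\times n_{2}\rightarrow n_{i}$ is a morphism of $\Nat$, so naturality applied to $h = \pi_{i}$ yields precisely the second diagram.

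So fix $\phi = \sum_{i}\alpha_{i}\ket{i} \in \neMlt(n_{1})$ and write $\alpha \coloneqq \sum_{i}\alpha_{i}$; since $\phi$ is non-empty, $\alpha > 0$, so $\ell_{n_{1}}(\phi)$ is defined. First I would record that $\neMlt(h)$ genuinely restricts to a map $\neMlt(n_{1})\rightarrow\neMlt(n_{2})$: by~\eqref{FunctorialEqn} the multiset $\neMlt(h)(\phi)$ has total multiplicity $\sum_{j}\sum_{i\in h^{-1}(j)}\alpha_{i} = \sum_{i}\alpha_{i} = \alpha > 0$ (the fibres $h^{-1}(j)$ partition $\{1,\ldots,n_{1}\}$), hence is again non-empty. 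This identity --- that $\neMlt(h)$ preserves the total count --- is the crux of the argument, because it says the normalising constant used by $\ell_{n_{2}}$ on $\neMlt(h)(\phi)$ is the very same $\alpha$ that $\ell_{n_{1}}$ uses on $\phi$. With that in hand both composites send $\phi$ to the distribution on $n_{2}$ whose mass at $j$ is $\frac{1}{\alpha}\sum_{i\in h^{-1}(j)}\alpha_{i}$: going right-then-down, $\ell_{n_{2}}\big(\neMlt(h)(\phi)\big) = \ell_{n_{2}}\big(\sum_{i}\alpha_{i}\ket{h(i)}\big)$ has mass $\frac{1}{\alpha}\sum_{i\in h^{-1}(j)}\alpha_{i}$ at $j$; going down-then-right, $\Dst(h)\big(\ell_{n_{1}}(\phi)\big) = \Dst(h)\big(\sum_{i}\frac{\alpha_{i}}{\alpha}\ket{i}\big)$ has mass $\sum_{i\in h^{-1}(j)}\frac{\alpha_{i}}{\alpha}$ at $j$, by the analogue of~\eqref{FunctorialEqn} for $\Dst$. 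These two expressions coincide, so the square commutes.

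I do not expect a genuine obstacle here: the mathematical content is entirely the observation that pushforward of multisets along $h$ commutes with normalisation because it leaves the total mass fixed. The only points needing a little care are bookkeeping ones --- matching the two equivalent formulas for the functorial action in~\eqref{FunctorialEqn} (summation over fibres $h^{-1}(j)$ versus relabelling of kets), checking the well-definedness issues just mentioned, and noting that the same computation in fact goes through for arbitrary functions between arbitrary sets, since non-empty multisets always have finite support; the restriction to $\Nat$ is thus only a matter of the ambient setting chosen with a view to the later full-support variants $\aneMlt$ and $\aneDst$.
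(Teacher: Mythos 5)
Your proposal is correct and follows essentially the same route as the paper: a direct element-wise check of the naturality square for an arbitrary function $h$, whose crux is that $\neMlt(h)$ preserves the total multiplicity $\alpha$, so both composites normalise by the same constant. You spell out the well-definedness (non-emptiness is preserved) and the fibre-summation form of~\eqref{FunctorialEqn} more explicitly than the paper, which instead works with the ket-relabelling form, but the argument is the same.
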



\begin{proof}
Let $h\colon n \rightarrow m$ be a function (between numbers as sets).
Then:
\[ \begin{array}[b]{rcl}
\big(\ell_{m} \after \Mlt(h)\big)(\sum_{i}\alpha_{i}\ket{i})
& = &
\ell_{m}\big(\sum_{i}\alpha_{i}\ket{h(i)}\big)
\\
& = &
\sum_{i} \frac{\alpha_i}{\alpha}\ket{h(i)}
\\
& = &
\Dst(h)\big(\sum_{i} \frac{\alpha_i}{\alpha}\ket{i}\big)
\hspace*{\arraycolsep}=\hspace*{\arraycolsep}
\big(\Dst(h) \after \ell_{n}\big)(\sum_{i}\alpha_{i}\ket{i}).
\end{array} \eqno{\QEDbox} \]
\end{proof}

\begin{remark}
\label{FrequentistRem}
\begin{enumerate}
\item \label{FrequentistRemMonad} The maps $\ell_{n} \colon \neMlt(n)
  \rightarrow \Dst(n)$ do \emph{not} form a map of monads, since they
  do not commute with multiplications. Here is a simple
  counterexample. Consider the multiset of multsets $\Phi =
  1\bigket{\,2\ket{a} + 4\ket{c}\,} + 2\bigket{\, 1\ket{a} + 1\ket{b}
    + 1\ket{c}\,}$ in $\neMlt(\neMlt(\{a,b,c\}))$. First taking
  (multiset) multiplication, and then normalisation gives $\Phi
  \longmapsto 4\ket{a} + 2\ket{b} + 6\ket{c} \longmapsto
  \frac{1}{3}\ket{a} + \frac{1}{6}\ket{b} +
  \frac{1}{2}\ket{c}$. However, first (outer en inner) normalising and
  then doing (distribution) multiplication yields: $\Phi \longmapsto
  \frac{1}{3}\bigket{\,\frac{1}{3}\ket{a} + \frac{2}{3}\ket{c}\,} +
  \frac{2}{3}\bigket{\, \frac{1}{3}\ket{a} + \frac{1}{3}\ket{b} +
    \frac{1}{3}\ket{c}\,} \longmapsto \frac{1}{3}\ket{a} +
  \frac{2}{9}\ket{b} + \frac{4}{9}\ket{c}$. Consequently, the
  $\ell_n$'s do not form a map between Kleisli categories.

\item \label{FrequentistRemName} We briefly explain the name `maximal
  likelihood estimation'. Given data $\vec{\alpha} = (\alpha_{1},
  \ldots, \alpha_{n})\in\neMlt(n)$ we can define a likelihood function
  $L_{\vec{\alpha}} \colon \Dst(n) \rightarrow [0,1]$ by
  $\sum_{i}r_{i}\ket{i} \mapsto \prod_{i} r_{i}^{\alpha_i}$. It
  captures the validity of getting $\alpha_{i}$ times outcome $i$, for
  each $i$, see the next paragraph.  This $L_{\vec{\alpha}}$ takes its
  maximum at the distribution $\ell_{n}(\alpha_{1}, \ldots,
  \alpha_{n}) = \sum_{i} \frac{\alpha_i}{\alpha}\ket{i}$, see
  \textit{e.g.}~\cite{KollerF09} for further details.


\end{enumerate}
\end{remark}

In Section~\ref{PrelimSec} we mentioned that one can extract a
conditional probability table (or channel) either from the table of
data, or from the associated probability distribution. We are going to
make this precise next. This requires some additional notation.  For
multisets there is an isomorphism:
\begin{equation}
\label{RowExtractEqn}
\xymatrix{
\Mlt(n\times m)\ar[r]^-{\row}_-{\cong} & \Mlt(m)^{n}
\quad\mbox{via}\quad
{\begin{array}{rcl}
\row\Big(\sum_{ij}\alpha_{ij}\ket{ij}\Big)(i)
& \coloneqq &
\sum_{j} \alpha_{ij}\ket{j}.
\end{array}}
}
\end{equation}

\noindent This $\row$ function captures disintegration for
multisets. It is much simpler than for distributions, since there is
no normalisation involved.

In the current situation we need to adapt the type of this
$\row$ function because we wish to exclude empty multisets.  Therefor
we use ad hoc notation $\sneMlt(n\times m)$ for the collection of
multisets $\phi = \sum_{ij}\alpha_{ij}\ket{ij}$ for which
$\Mlt(\pi_{1})(\phi)(i) = \sum_{j}\alpha_{ij} \neq 0$ for each $i\in
n$. Then $\sneMlt(n\times m) \subseteq \neMlt(n\times m)$. We can then
restrict the row-extraction function to $\row \colon \sneMlt(n\times
m) \rightarrow \neMlt(m)^{n}$. We will next show how this
row-extraction gives a link with disintegration.

\begin{proposition}
\label{FrequentistTableProp}
Disintegrations for multisets and for distributions commute
with maximal likelihood estimation $\ell$, as in:
\[ \xymatrix@R-1pc{
\sneMlt(n\times m)\ar[d]_{\ell_{n\times m}}\ar[r]^-{\row} &
   \neMlt(m)^{n}\ar[d]^{(\ell_{m})^{n}}
\\
\Dst(n\times m)\ar[r]_{\dis} & \Dst(m)^{n}
} \]
\end{proposition}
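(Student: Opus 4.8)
The plan is to verify the square by chasing an arbitrary multiset $\phi = \sum_{ij}\alpha_{ij}\ket{ij}\in\sneMlt(n\times m)$ through both legs and checking that the two results agree. Before doing any computation I would make sure every map involved is actually defined on the element at hand. By the very definition of $\sneMlt(n\times m)$, the row sums $\alpha_{i}\coloneqq\sum_{j}\alpha_{ij} = \Mlt(\pi_{1})(\phi)(i)$ are nonzero for every $i\in n$; hence each $\row(\phi)(i) = \sum_{j}\alpha_{ij}\ket{j}$ is a non-empty multiset, so $(\ell_{m})^{n}$ applies to $\row(\phi)$. Dually, writing $\alpha\coloneqq\sum_{ij}\alpha_{ij}$ and $\omega\coloneqq\ell_{n\times m}(\phi) = \sum_{ij}\frac{\alpha_{ij}}{\alpha}\ket{ij}$, the first marginal is $\fM(\omega)(i) = \sum_{j}\frac{\alpha_{ij}}{\alpha} = \frac{\alpha_{i}}{\alpha} > 0$ for each $i$, so $\fM(\omega)\in\aneDst(n)$ and the partial map $\dis$ is defined on $\omega$.

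With that settled, I would compute both composites pointwise. Along the top-then-right leg,
\[
\big((\ell_{m})^{n} \after \row\big)(\phi)(i)
\;=\; \ell_{m}\Big(\sum_{j}\alpha_{ij}\ket{j}\Big)
\;=\; \sum_{j}\frac{\alpha_{ij}}{\alpha_{i}}\ket{j}.
\]
Along the left-then-bottom leg, using the defining formula~\eqref{DisintegrationEqn} for $\dis$,
\[
\big(\dis \after \ell_{n\times m}\big)(\phi)(i)(j)
\;=\; \dis(\omega)(i)(j)
\;=\; \frac{\omega(i,j)}{\fM(\omega)(i)}
\;=\; \frac{\alpha_{ij}/\alpha}{\alpha_{i}/\alpha}
\;=\; \frac{\alpha_{ij}}{\alpha_{i}}.
\]
These agree for all $i$ and $j$, so the square commutes. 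The only thing that really happens is that the single global normalisation constant $\alpha$ introduced by $\ell_{n\times m}$ is cancelled again when $\dis$ re-normalises row by row, leaving exactly the per-row normalisation produced by $\ell_{m}$ after $\row$.

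I do not expect a genuine obstacle: the content of the proposition is precisely that the one "divide by the total" built into $\ell_{n\times m}$ is compatible with the per-row "divide by the row total" built into $\dis$ (and into $(\ell_{m})^{n}\after\row$), which is transparent once the constants are written out. The single point needing care — and the reason the statement uses $\sneMlt(n\times m)$ rather than $\neMlt(n\times m)$ — is that $\dis$ is only partial: $\sneMlt(n\times m)$ is exactly the subset on which the composite $\dis\after\ell_{n\times m}$ is total, which is the well-definedness check carried out above.
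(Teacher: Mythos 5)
Your proof is correct and follows essentially the same route as the paper's: a direct pointwise computation showing that the global normalisation by $\alpha$ introduced by $\ell_{n\times m}$ cancels in the disintegration formula~\eqref{DisintegrationEqn}, leaving the row-wise normalisation by $\sum_{j}\alpha_{ij}$ that $(\ell_{m})^{n}\after\row$ produces. Your explicit well-definedness check (that $\sneMlt(n\times m)$ is precisely where both composites are total) is left implicit in the paper but is a worthwhile addition.
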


\begin{proof}
We simply compute:
\[ \begin{array}[b]{rcl}
\big(\dis \after \ell_{n\times m}\big)(\sum_{ij}\alpha_{ij}\ket{ij})(i)
& = &
\dis\big(\sum_{ij}\frac{\alpha_{ij}}{\alpha}\ket{ij}\big)(i)
   \quad \mbox{where } \alpha = \sum_{ij}\alpha_{ij}
\\
& \smash{\stackrel{\eqref{DisintegrationEqn}}{=}} &
\sum_{j} \frac{\nicefrac{\alpha_{ij}}{\alpha}}
             {\sum_{j}\nicefrac{\alpha_{ij}}{\alpha}}\ket{j}
\\
& = &
\sum_{j} \frac{\alpha_{ij}}{\sum_{j}\alpha_{ij}}\ket{j}
\\
& = &
\ell_{m}\big(\sum_{j} \alpha_{ij}\ket{j}\big)
\\
& \smash{\stackrel{\eqref{RowExtractEqn}}{=}} &
\ell_{m}\big(\row(\sum_{ij}\alpha_{ij}\ket{ij})(i)\big)
\\
& = &
\big((\ell_{m})^{n} \after \row\big)(\sum_{ij}\alpha_{ij}\ket{ij})(i).
\end{array} \eqno{\QEDbox} \]
\end{proof}

Table~\eqref{TableEqn} in Section~\ref{PrelimSec} can be
described\footnote{Here, in a more abstract mode, we write $2$ instead
  of $\{H,T\}$ and $3$ instead of $\{0,1,2\}$.} as an element of
$\sneMlt(2\times 3)$, from which we have shown that we can extract a
channel $\kto{2}{3}$. The above result says that it does not matter if
we form the corresponding joint distribution first and then
disintegrate, or if we extract the CPT/channel directly from the
table. This sometimes called a `decomposition property' that allows us
to reduce learning of a CPT/channel to a set of local learning
problems. The `local' adjective means: by $\ell_{m}$, under the
exponent $n$, in the diagram in
Proposition~\ref{FrequentistTableProp}. An alternative and equivalent
way to express the diagram in Proposition~\ref{FrequentistTableProp}
is as equation:
\begin{equation}
\label{FrequentistTableEqn}
\begin{array}{rcl}
\ell_{n\times m}(\vec{\alpha})
& = &
\tuple{\idmap, (\ell_{m})^{n}(\row(\vec{\alpha}))} \gg 
   \ell_{n}\big(\Mlt(\pi_{1})(\vec{\alpha})\big).
\end{array}
\end{equation}

\auxproof{
Alternatively:
\[ \begin{array}{rcl}
\lefteqn{\tuple{\idmap, (\ell_{m})^{n}(\row(\vec{\alpha}))} \gg 
   \ell_{n}\big(\Mlt(\pi_{1})(\vec{\alpha})\big)}
\\
& = &
\displaystyle\sum_{i,j} \ell_{n}\big(\Mlt(\pi_{1})(\vec{\alpha})\big)(i) \cdot
   (\ell_{m})^{n}(\row(\vec{\alpha}))(i)(j)\bigket{ij}
\\
& = &
\displaystyle\sum_{i,j} \frac{\sum_{j}\alpha_{ij}}{\alpha} \cdot
   \ell_{m}\big(\textstyle\sum_{j}\alpha_{ij}\ket{j}\big)\bigket{ij}
\\
& = &
\displaystyle\sum_{i,j} \frac{\sum_{j}\alpha_{ij}}{\alpha} \cdot
   \frac{\alpha_{ij}}{\sum_{j}\alpha_{ij}}\bigket{ij}
\\
& = &
\displaystyle\sum_{i,j} \frac{\alpha_{ij}}{\alpha}\bigket{ij}
\\
& = &
\ell_{n\times m}(\vec{\alpha})
\end{array} \]
}

Thus, the following (categorical) picture emerges. Bayesian networks
can be seen as graphs in the Kleisli category $\Kl(\Dst)$ of the
distribution monad $\Dst$, see esp.~\cite[Chap.~4]{Fong12} and
also~\cite{JacobsZ16,JacobsZ18}. If we write $G$ for the underlying
graph of a Bayesian network, its conditional probability tables may be
described either by:
\begin{enumerate}
\item a graph homomorphism $G \rightarrow
  \mathcal{U}\big(\Kl(\Dst)\big)$, where
  $\mathcal{U}\big(\Kl(\Dst)\big)$ is the underlying graph of the
  Kleisli category $\Kl(\Dst)$;

\item a strong monoidal functor $\text{Free}(G) \rightarrow \Kl(\Dst)$
  from the free monoidal category $\text{Free}(\mathcal{G})$ on
  $\mathcal{G}$ with diagonals and discarders.
\end{enumerate}

\noindent Since nodes of Bayesian networks typically have finite sets
as domains, we can restrict to the full subcategory $\KlN(\Dst)
\hookrightarrow \Kl(\Dst)$ with $n\in\NNO$ as objects.

Interestingly, a frequency table as in~\eqref{TableEqn} may be
reorganised similarly as a graph homomorphism $G \rightarrow
\mathcal{U}\big(\KlN(\Mlt)\big)$, for the multiset monad $\Mlt$. This
reorganisation happens via the above `$\row$' function and
marginalisation.

We have seen in
Remark~\ref{FrequentistRem}~\eqref{FrequentistRemMonad} that the
$\ell_{n} \colon \neMlt(n) \rightarrow \Dst(n)$ do \emph{not} form a
map of monads, and hence do not produce a functor $\KlN(\neMlt)
\rightarrow \KlN(\Mlt)$. But these $\ell_n$ do form a graph
homomorphism between the underlying graphs. Hence the passage from a
frequency table to a Bayesian network (with underlying graph
$G$) can be described as a composite of graph homomorphisms:
\[ \xymatrix@C-1pc{
G\ar[rrr]^-{\text{table}} & & &
   \mathcal{U}\big(\KlN(\Mlt)\big)\ar[rr]^-{\ell} & &
   \mathcal{U}\big(\KlN(\Dst)\big)
} \]


\auxproof{
We conclude with one more observation. It requires `monoidal
structure' for the functors $\Mlt$ and $\Dst$. It is described in
terms of a product map $\otimes \colon \Mlt(X) \times \Mlt(Y)
\rightarrow \Mlt(X\times Y)$ given by $(\sum_{i}\alpha_{i}\ket{x_i})
\otimes (\sum_{j}\beta_{j}\ket{y_j}) = \sum_{ij} \alpha_{i}\cdot
\beta_{j}\ket{x_{i},x_{j}}$. A similar product map $\otimes\colon
\Dst(X)\times\Dst(Y) \rightarrow \Dst(X\times Y)$ exists for
probability distributions.
}

We do not elaborate the following observation since it is not used any
further.

\begin{lemma}
\label{FrequentistMonoidalLem}
The natural transformation $\ell \colon \neMlt \Rightarrow \Dst$
from~\eqref{FrequentistEqn} is monoidal. \QED
\end{lemma}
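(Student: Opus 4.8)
The plan is to unwind the definition of a monoidal natural transformation and verify the single coherence square by direct computation. First I would record the monoidal (laxator) structure maps on the two functors: for multisets, $\otimes \colon \neMlt(n_1) \times \neMlt(n_2) \to \neMlt(n_1 \times n_2)$ sends $\big(\sum_i \alpha_i\ket{i}\big) \otimes \big(\sum_j \beta_j\ket{j}\big)$ to $\sum_{ij} \alpha_i\beta_j \ket{i,j}$, and similarly $\otimes \colon \Dst(n_1)\times\Dst(n_2) \to \Dst(n_1\times n_2)$ multiplies the probabilities componentwise. (Note this does restrict to the non-empty sub-monads: a product of two multisets with nonzero total sum again has nonzero total sum, since the total of $\phi\otimes\psi$ is the product of the totals.) One should also note the trivial unit comparison at the object $1\in\Nat$, where $\neMlt(1)\cong\pNNO$ and $\Dst(1)\cong 1$, which $\ell_1$ respects automatically.

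The substance is then the commutativity of
\[ \xymatrix@C+1pc@R-0.5pc{
\neMlt(n_1)\times\neMlt(n_2) \ar[r]^-{\otimes} \ar[d]_{\ell_{n_1}\times\ell_{n_2}} &
  \neMlt(n_1\times n_2) \ar[d]^{\ell_{n_1\times n_2}}
\\
\Dst(n_1)\times\Dst(n_2) \ar[r]_-{\otimes} & \Dst(n_1\times n_2).
} \]
I would chase a pair $\big(\sum_i\alpha_i\ket{i},\ \sum_j\beta_j\ket{j}\big)$ both ways. Going right-then-down: $\otimes$ produces $\sum_{ij}\alpha_i\beta_j\ket{i,j}$, whose total is $\big(\sum_i\alpha_i\big)\big(\sum_j\beta_j\big)=\alpha\beta$, so $\ell_{n_1\times n_2}$ yields $\sum_{ij}\frac{\alpha_i\beta_j}{\alpha\beta}\ket{i,j}$. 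Going down-then-right: $\ell_{n_1}\times\ell_{n_2}$ gives $\big(\sum_i\frac{\alpha_i}{\alpha}\ket{i},\ \sum_j\frac{\beta_j}{\beta}\ket{j}\big)$, and then $\otimes$ gives $\sum_{ij}\frac{\alpha_i}{\alpha}\cdot\frac{\beta_j}{\beta}\ket{i,j}$. These agree because $\frac{\alpha_i\beta_j}{\alpha\beta}=\frac{\alpha_i}{\alpha}\cdot\frac{\beta_j}{\beta}$, i.e.\ the normalisation constant of a product multiset factors as the product of the normalisation constants. That factorisation is the only real content, and it is immediate.

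There is essentially no obstacle here; the one point requiring a moment's care is merely bookkeeping: checking that the monoidal structure maps are well defined on the non-empty sub-functors $\neMlt$ and $\neMlt\times\neMlt$ (handled by the observation that totals multiply), and pinning down that "monoidal natural transformation" in this context means compatibility with these laxators plus the unit, since $\Nat$ and $\Sets$ carry their cartesian structure but $\neMlt,\Dst$ are only lax monoidal for it. Once the coherence square is the only thing to prove, the computation above closes it.
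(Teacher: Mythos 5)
Your proposal is correct and matches the paper's own (unpublished) proof essentially verbatim: both define the laxators $\otimes$ by multiplying multiplicities, verify the coherence square via the factorisation $\frac{\alpha_i\beta_j}{\alpha\beta}=\frac{\alpha_i}{\alpha}\cdot\frac{\beta_j}{\beta}$ (using that the total of a product multiset is the product of the totals), and dispose of the unit condition by noting $\Dst(1)\cong 1$ is final. Your extra remark that $\otimes$ restricts to the non-empty subfunctors is a small but welcome addition.
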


\auxproof{
\begin{proof}
For multisets $\sum_{i}\alpha_{i}\ket{i}\in\neMlt(n)$ and
$\sum_{j}\beta_{j}\ket{j}\in\neMlt(m)$ write $\alpha = \sum_{i}\alpha_{i}$
and $\beta = \sum_{j}\beta_{j}$. Then:
\[ \begin{array}[b]{rcl}
\lefteqn{\textstyle\ell_{n\times m}\big((\sum_{i}\alpha_{i}\ket{i}) 
   \otimes (\sum_{j}\beta_{j}\ket{j})\big)}
\\
& = &
\ell_{n\times m}\big(\sum_{ij} \alpha_{i}\cdot\beta_{j}\ket{ij}\big)
\\
& = &
\sum_{ij} \frac{\alpha_{i}\cdot\beta_{j}}{\alpha\cdot\beta}\ket{ij} 
\qquad\qquad \mbox{since } \sum_{ij}\alpha_{i}\cdot\beta_{j} = 
   (\sum_{i} \alpha_{i})\cdot (\sum_{j}\beta_{j}) = \alpha\cdot\beta
\\
& = &
(\sum_{i} \frac{\alpha_{i}}{\alpha}\ket{i}) \otimes 
   (\sum_{j} \frac{\beta_{j}}{\beta}\ket{j})
\\
& = &
\ell_{n}(\sum_{i}\alpha_{i}\ket{i}) \otimes \ell_{m}(\sum_{j}\beta_{j}\ket{j}).
\end{array} \]

\noindent We still have to check that $\ell_{1} \colon \neMlt(1)
\rightarrow \Dst(1)$ commutes with the units $1\rightarrow\neMlt(1)$
and $1\rightarrow\Dst(1)$. But this is trivial, since $\Dst(1) \cong
1$ is final. \QED
\end{proof}
}

This result says that likelihood estimation yields the same outcome on
separate tables as on a parallel combination of these tables. It is
mostly of categorical relevance. In a learning scenario there is no
point in first combining two separate tables into one, and then
learning from the combination.

\section{Prerequisites on continuous probability}\label{ContPrereqSec}

Frequentist learning can be done via discrete probability
distributions, but Bayes\-ian learning requires continuous probability
distributions --- in particular in the form of Dirichlet distributions
on probabilistic parameters in $[0,1]$ for the multinomial
case. Categorically, such continuous distributions are captured via
the Giry monad $\Giry$ on the category of measurable
spaces~\cite{Giry82,Panangaden09,Jacobs17a}. Here we shall only use
measurable spaces which are a bounded subset of $\R^n$, for some $n$,
with their standard measure. Hence we do not need measure theory in
full generality.

We recall that on a measurable space $X = (X,\Sigma_{X})$, with
measurable subsets $\Sigma_{X} \subseteq \Pow(X)$, the Giry monad
$\Giry$ is defined as:
\[ \begin{array}{rcl}
\Giry(X)
& \coloneqq &
\set{\omega\colon \Sigma_{X} \rightarrow [0,1]}{\omega 
   \mbox{ is countably additive, and } \omega(X) = 1}.
\end{array} \]

\noindent A function $f\colon X \rightarrow Y$ between measurable
spaces is called measurable if its inverse image function $f^{-1}
\colon \Pow(Y) \rightarrow \Pow(X)$ restricts to $f^{-1} \colon
\Sigma_{Y} \rightarrow \Sigma_{X}$. For such a measurable $f$ one
gets a measurable function $\Giry(f) \colon \Giry(X) \rightarrow
\Giry(Y)$ by:
\begin{equation}
\label{GiryFunEqn}
\begin{array}{rcl}
\Giry(f)\Big(\Sigma_{X}\xrightarrow{\omega}[0,1]\Big)
& \coloneqq &
\Big(\Sigma_{Y} \xrightarrow{f^{-1}} \Sigma_{X} \xrightarrow{\omega}[0,1]\Big).
\end{array}
\end{equation}

\noindent Thus, $\Giry(f)(\omega)$ is the `image' measure, also called
the `push forward' measure.

As mentioned, in the current context the measurable space $X$ is
typically a (bounded) subset $X \subseteq \R^{n}$ for some $n$. The
probability measures in $\Giry(X)$ that we consider are given by a
`probability density function' (abbreviated as `pdf'). Such a pdf is a
function $f \colon X \rightarrow \nnR$ with $\int f(\vec{x})\intd
\vec{x} = 1$. We write $\PDF(X)$ for the set of pdf's on
$X\subseteq\R^{n}$, for some $n$.

For a measurable subset $M\subseteq X$ we write $\int_{M}
f(\vec{x})\intd \vec{x}$ or $\int_{\vec{x}\in M} f(\vec{x})\intd
\vec{x}$ for the integral $\int \indic{M}(\vec{x})\cdot
f(\vec{x})\intd \vec{x}$, where $\indic{M} \colon X \rightarrow [0,1]$
is the indicator function for $M$; it is $1$ on $\vec{x}\in M$ and $0$
on $\vec{x}\not\in M$.  In this way we can define a function:
\begin{equation}
\label{PDFtoGiryEqn}
\xymatrix{
\PDF(X)\ar[r]^-{\mathcal{I}} & \Giry(X)
\qquad\mbox{namely}\qquad
{\begin{array}{rcl}
\mathcal{I}(f)(M)
& \coloneqq &
\displaystyle\int_{M} f(\vec{x}) \intd \vec{x}.
\end{array}}
}
\end{equation}

Disintegration for continuous joint distributions is much more
difficult than for discrete distributions. However, in our current
setting (see also~\cite{ClercDDG17,ChoJ17a}), where we restrict
ourselves to distributions $\omega \in\Giry(X\times Y)$ given by a
pdf, say $\omega = \mathcal{I}(f) = \int f$, for $f\colon X\times
Y\rightarrow \nnR$, there is a formula to obtain $\dis(\omega) \in
\Giry(Y)^{X}$, namely:
\begin{equation}
\label{GiryDisintegrationEqn}
\begin{array}{rcl}
\dis\big(\mathcal{I}(f)\big)(x)(N)
& \coloneqq &
\displaystyle\frac{\int_{N} f(x,y)\intd y}{\int f(x,y)\intd y}.
\end{array}
\end{equation}

For a continuous distribution $\omega\in\Giry(X)$ on a measurable
space $X$ and a measurable function $q\colon X \rightarrow [0,1]$ we
write $\omega\models q$ for the `continuous' validity value in
$[0,1]$, obtained via Lebesgue integration $\int q\intd \phi$. If the
measure $\phi$ is given by a pdf $f$, as in $\phi = \mathcal{I}(f) =
\int f$, then $\int q \intd \phi$ equals $\int q(\vec{x})\cdot
f(\vec{x}) \intd\vec{x}$. Also in the continuous case we can update a
distribution $\omega\in\Giry(X)$ to $\omega|_{p}\in\Giry(X)$ via the
definition:
\begin{equation}
\label{ContConditionEqn}
\begin{array}{rcl}
\omega|_{p}(M)
& \coloneqq &
\displaystyle\frac{\int_{M} q\intd \omega}{\omega\models q}.
\end{array}
\end{equation}

\section{Bayesian learning}\label{BayesianSec}

In Section~\ref{FrequentistSec} we have uncovered some basic
categorical structure in frequentist parameter estimation. Our next
challenge is to see if we can find similar structure for Bayesian
parameter estimation, where parameters (for discrete probability
distributions, or multinomials) are obtained via successive Bayesian
updates.  This will involve (continuous) Dirichlet distributions over
the probabilistic parameters $r_{i}\in[0,1]$ of distributions
$\sum_{i}r_{i}\ket{x_i}$. For Dirichlet distributions it is required
that each of these $r_{i} \in [0,1]$ is non-zero. We shall write
$\aneDst(n) \subseteq \Dst(n) \subseteq \R^n$ for the subset of
$(r_{1}, \ldots, r_{n})$ with $r_{i} > 0$ for \emph{each} $i\in n$.


We shall study \emph{Dirichlet} distributions via their pdf's
written as $d_n$ in:
\[ \xymatrix{
\aneMlt(n) \ar[r]^-{d_n} & \PDF(\aneDst(n))
} \]

\noindent given by:
\begin{equation}
\label{DirichletPDFEqn}
\begin{array}{rcl}
d_{n}(\alpha_{1}, \ldots, \alpha_{n})(x_{1}, \ldots, x_{n})
& \coloneqq &
{\displaystyle\frac{\Gamma(\sum_{i}\alpha_{i})}
                   {\prod_{i} \Gamma(\alpha_{i})}} 
    \cdot \prod_{i} x_{i}^{\alpha_{i}-1}.
\end{array}
\end{equation}

\noindent The variables $\alpha_{i} \in \pNNO$ are sometimes called
\emph{hyperparameters}, in contrast to the `ordinary' variables
$x_{i}\in (0,1]$. We use these Dirichlet pdf's $d_n$ to define what we
  call the Dirichlet distribution functions $\Dir_n$ in:
\begin{equation}
\label{DirichletDistributionEqn}
\xymatrix{
\aneMlt(n)\ar[r]^-{\Dir_n} & \Giry\big(\aneDst(n)\big)
\qquad\mbox{via}\qquad
{\begin{array}{rcl}
\Dir_{n}
& \coloneqq &
\mathcal{I} \after d_{n}.
\end{array}}
}
\end{equation}

\noindent Hence $\Dir_{n}(\vec{\alpha})(M) =
\mathcal{I}\big(d_{n}(\vec{\alpha})\big) = \int_{M}
d_{n}(\vec{\alpha})(\vec{x})\intd \vec{x}$, see~\eqref{PDFtoGiryEqn}.

Our main aim in this section is to prove that the maps $\Dir_{n}
\colon \aneMlt(n) \rightarrow \Giry\big(\aneDst(n)\big)$ are natural
in $n$ --- in analogy with naturality for $\ell_{n} \colon \neMlt(n)
\rightarrow \Dst(n)$ in Lemma~\ref{FrequentistNatLem} --- but
\emph{only} for surjective functions. We proceed via a functorial
reformulation of the aggregation property of the Dirichlet functions.

There are some basic facts that we need about these $d_n$,
see~\cite{Bishop06} for details.
\begin{enumerate}
\item The operation $\Gamma$ in~\eqref{DirichletPDFEqn} is the `Gamma'
  function, which is defined on natural numbers $k > 1$ as $\Gamma(k)
  = (k-1)!$. Hence $\Gamma$ can be defined recursively as $\Gamma(1) =
  1$ and $\Gamma(k+1) = k\cdot\Gamma(k)$. We shall use $\Gamma$ on
  natural numbers only, but it can be defined on complex numbers too.

\item The fraction in~\eqref{DirichletPDFEqn} works as a normalisation
  factor, and satisfies:
\begin{equation}
\label{DirichletNormalisationEqn}
\begin{array}{rcl}
\displaystyle\frac{\prod_{i} \Gamma(\alpha_{i})}{\Gamma(\sum_{i}\alpha_{i})}
& = &
\displaystyle\int_{\vec{x}\in\Dst(n)} \textstyle
    \prod_{i} x_{i}^{\alpha_{i}-1} \intd \vec{x}.
\end{array}
\end{equation}

\noindent This ensures that $d_{n}(\vec{\alpha})$ is a pdf.


\item For each $\vec{\alpha}\in\aneMlt(n)$ and $i\in n$ one has:
\begin{equation}
\label{DirichletPointEqn}
\begin{array}{rcccl}
\displaystyle\int_{\vec{x}\in\aneDst(n)} x_{i}\cdot d_{n}(\vec{\alpha})(\vec{x}) 
   \intd \vec{x}
& = &
\displaystyle \frac{\alpha_{i}}{\sum_{j}\alpha_j}
& \smash{\stackrel{\eqref{FrequentistEqn}}{=}} &
\ell_{n}(\vec{\alpha})(i).
\end{array}
\end{equation}

\noindent The first equation follows easily from the previous two
points.  The second equation establishes a (standard) link between
maximal likelihood estimation $\ell_n$ and Dirichlet pdf's $d_n$.

\auxproof{
\[ \begin{array}{rcl}
\displaystyle\int_{\vec{x}\in\Dst(n)} x_{i}\cdot d_{n}(\vec{\alpha})(\vec{x}) 
   \intd \vec{x}
& = &
\displaystyle\int_{\vec{x}\in\Dst(n)} 
   {\displaystyle\frac{\Gamma(\sum_{j}\alpha_{j})}{\prod_{j} \Gamma(\alpha_{j})}} 
   \cdot x_{i}^{\alpha_{i}} \cdot \prod_{j\neq i} x_{j}^{\alpha_{j}-1} \intd \vec{x}
\\
& \smash{\stackrel{\eqref{DirichletNormalisationEqn}}{=}} &
\displaystyle\frac{\Gamma(\sum_{j}\alpha_{j})}{\prod_{j} \Gamma(\alpha_{j})} 
   \cdot \frac{\Gamma(\alpha_{i}+1)\cdot\prod_{j\neq i}\Gamma(\alpha_{j})}
   {\Gamma(1 + \sum_{j}\alpha_{j})}
\\
& = &
\displaystyle\frac{\Gamma(\sum_{j}\alpha_{j})}{\prod_{j} \Gamma(\alpha_{j})} 
   \cdot \frac{\alpha_{i} \cdot \Gamma(\alpha_{i}) \cdot 
       \prod_{j\neq i}\Gamma(\alpha_{j})}
   {(\sum_{j}\alpha_{j}) \cdot \Gamma(\sum_{j}\alpha_{j})}
\\
& = &
\displaystyle\frac{\alpha_{i}}{\sum_{j}\alpha_{j}}.
\end{array} \]
}
\end{enumerate}

\noindent Whereas results about maximal likelihood estimation in
Section~\ref{FrequentistSec} are relatively easy, things are
mathematically a lot more challenging now. We need the so-called
\emph{aggregation property} of the Dirichlet pdf's $d_n$,
from~\cite{FrigyikKG10}. Our contribution is a reformulation this
property --- not the property itself --- in functorial form, starting
from a formulation in the first point below that is close to the
orginal.

\begin{lemma}
\label{OneSumLem}
Let $(\alpha_{1}, \ldots, \alpha_{n}) \in \aneMlt(n)$.
\begin{enumerate}
\item For $(x_{2}, \ldots, x_{n}) \in \aneDst(n-1)$,
\begin{equation}
\label{OneSumEqn} 
\begin{array}{rcl}
\lefteqn{d_{n-1}(\alpha_{1}+\alpha_{2}, \alpha_{3}, \ldots, \alpha_{n})
   (x_{2}, x_{3}, \ldots, x_{n})}
\\
& = &
\displaystyle\int_{y\in (0,x_{2})} 
   d_{n}(\alpha_{1}, \alpha_{2}, \alpha_{3}, \ldots, \alpha_{n})
   (y, x_{2} - y, x_{3}, \ldots, x_{n}) \intd y.
\end{array}
\end{equation}

\item Define $h\colon n \twoheadrightarrow n-1$ as $h(1) = h(2) = 1$
  and $h(i+2) = i+1$, so that $\aneMlt(h)(\alpha_{1}, \ldots,
  \alpha_{n}) = (\alpha_{1}+\alpha_{2}, \alpha_{3}, \ldots,
  \alpha_{n})$. For a measurable subset $N\subseteq \aneDst(n-1)$,
\begin{equation}
\label{OneSumSubsetEqn} 
\begin{array}{rcl}
\displaystyle\int_{\vec{x}\in N} d_{n-1}\Big(\aneMlt(h)(\vec{\alpha})\Big)
   (\vec{x}) \intd \vec{x}
& = &
\displaystyle\int_{\vec{y}\in\aneDst(h)^{-1}(N)} 
   d_{n}(\vec{\alpha})(\vec{y}) \intd \vec{y}.
\end{array}
\end{equation}
\end{enumerate}
\end{lemma}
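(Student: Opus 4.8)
The plan is to derive part (2) from part (1) by a Fubini-type argument, reorganizing the iterated integral over the simplex $\aneDst(n)$ along the fibres of the map $\aneDst(h)\colon\aneDst(n)\twoheadrightarrow\aneDst(n-1)$. First I would make the fibre structure of $\aneDst(h)$ explicit: with $h$ as defined, the affine-linear map $\aneDst(h)$ sends $(y_1,y_2,y_3,\ldots,y_n)$ to $(y_1+y_2,y_3,\ldots,y_n)$, so the fibre over a point $(x_2,x_3,\ldots,x_n)\in\aneDst(n-1)$ is exactly $\{(y,x_2-y,x_3,\ldots,x_n)\mid y\in(0,x_2)\}$, a one-dimensional segment parametrized by $y$. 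Thus $\aneDst(h)^{-1}(N)=\{(y,x_2-y,x_3,\ldots,x_n)\mid (x_2,\ldots,x_n)\in N,\ y\in(0,x_2)\}$.

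Next I would compute the right-hand side of \eqref{OneSumSubsetEqn} by changing variables from the coordinates on $\aneDst(n)$ (say $(y_1,x_2,\ldots,x_n)$ with $y_1\in(0,x_2)$) and applying Fubini: integrating first over the fibre variable $y\in(0,x_2)$ and then over $(x_2,\ldots,x_n)\in N$. The inner integral $\int_{y\in(0,x_2)} d_n(\vec\alpha)(y,x_2-y,x_3,\ldots,x_n)\intd y$ is precisely the right-hand side of \eqref{OneSumEqn}, which by part (1) equals $d_{n-1}(\alpha_1+\alpha_2,\alpha_3,\ldots,\alpha_n)(x_2,\ldots,x_n)=d_{n-1}(\aneMlt(h)(\vec\alpha))(\vec x)$. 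Substituting this back leaves exactly $\int_{\vec x\in N} d_{n-1}(\aneMlt(h)(\vec\alpha))(\vec x)\intd\vec x$, the left-hand side of \eqref{OneSumSubsetEqn}. I should be slightly careful that the relevant Lebesgue measures match up: $\aneDst(n)$ and $\aneDst(n-1)$ are $(n-1)$- and $(n-2)$-simplices, and the coordinates I am peeling off ($y$, then $x_2,\ldots,x_n$) must carry the product of the segment measure $\intd y$ with the $(n-2)$-dimensional simplex measure $\intd\vec x$; because $\aneDst(h)$ is a projection in the chosen affine chart, the Jacobian is $1$ and no correction factor appears.

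The main obstacle is the bookkeeping around Fubini and the measure-theoretic setup: one must confirm that $d_n(\vec\alpha)$ is integrable (it is a pdf, hence in $L^1$, so Tonelli/Fubini applies to the nonnegative integrand without further hypotheses), that $\aneDst(h)^{-1}(N)$ is measurable (clear, since $\aneDst(h)$ is continuous, hence measurable, and $N$ is measurable), and that the parametrization of the fibres is a measurable isomorphism onto $\aneDst(h)^{-1}(N)$ with trivial Jacobian in the affine chart. Once the chart on the simplices is fixed so that $\aneDst(h)$ becomes a coordinate projection, everything reduces to the elementary identity $\int_{\vec x\in N}\!\big(\int_{y\in(0,x_2)} g(y,\vec x)\intd y\big)\intd\vec x=\int_{\aneDst(h)^{-1}(N)} g\intd(\text{Leb})$ with $g=d_n(\vec\alpha)$, together with part (1) applied pointwise in $\vec x$. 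Part (1) itself I take as given (it is the restatement of the aggregation property of Dirichlet pdf's from the cited source), so no independent computation with Gamma functions is needed here.
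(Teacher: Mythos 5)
Your argument for part~(2) is correct and is essentially the paper's own: the paper likewise writes both sides as integrals of indicator functions over a cube, applies Fubini to swap the fibre variable $y\in(0,x_2)$ with the base variables, and invokes~\eqref{OneSumEqn} pointwise. Your description of the fibres of $\aneDst(h)$ over $(x_2,\ldots,x_n)$ as the segments $\{(y,x_2-y,x_3,\ldots,x_n)\mid y\in(0,x_2)\}$, and the observation that in the affine chart the map is a coordinate projection with trivial Jacobian, match what the paper does implicitly.

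The gap is part~(1), which you declare you will take as given. The lemma as stated asks for both parts, and the paper does prove~(1): after expanding the definition~\eqref{DirichletPDFEqn} on both sides and cancelling the common factors $\Gamma(\sum_i\alpha_i)$, $\prod_{i>2}\Gamma(\alpha_i)^{-1}$ and $\prod_{i>2}x_i^{\alpha_i-1}$, the claim reduces to the Beta-type identity
\[
\frac{\Gamma(\alpha_{1})\,\Gamma(\alpha_{2})}{\Gamma(\alpha_{1}+\alpha_{2})}\cdot x^{\alpha_{1}+\alpha_{2}-1}
\;=\;
\int_{0}^{x} y^{\alpha_{1}-1}\,(x-y)^{\alpha_{2}-1}\intd y,
\]
which the paper derives from the normalisation equation~\eqref{DirichletNormalisationEqn} (in the case $n=2$) by the substitution $z=y/x$. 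It is true that the paper itself frames~(1) as ``close to the original'' aggregation property of~\cite{FrigyikKG10}, so citing that source is defensible as a matter of exposition; but if the intent is a self-contained proof of the lemma, you should either carry out this short computation or state explicitly that you are importing equation~\eqref{OneSumEqn} verbatim from the reference rather than proving it. As written, the nontrivial analytic content of the lemma --- everything beyond measure-theoretic bookkeeping --- is exactly the piece you have omitted.
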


\begin{proof}
\begin{enumerate}
\item By expanding the definition~\eqref{DirichletPDFEqn} of $d$, the
left-hand-side and right-hand-side of equation~\eqref{OneSumEqn}
become:

\smallskip

\begin{itemize}
\item $\displaystyle\frac{\Gamma(\sum_{i}\alpha_{i})}
   {\Gamma(\alpha_{1}+\alpha_{2})\cdot \prod_{i>2}\Gamma(\alpha_{i})} \cdot
   \textstyle x_{2}^{\alpha_{1}+\alpha_{2}-1}\cdot \prod_{i>2}x_{i}^{\alpha_{i}-1}$

\medskip

\item $\displaystyle\int_{0}^{x_{2}} \frac{\Gamma(\sum_{i}\alpha_{i})}
   {\prod_{i}\Gamma(\alpha_{i})} \cdot \textstyle
   y^{\alpha_{1}-1} \cdot (x_{2}-y)^{\alpha_{2}-1}\cdot \prod_{i>2}x_{i}^{\alpha_{i}-1}$
\end{itemize}

\smallskip

\noindent By eliminating the same factors on both sides, what
we have to prove is:
\[ \begin{array}{rcl}
\displaystyle\frac{\Gamma(\alpha_{1}) \cdot \Gamma(\alpha_{2})}
   {\Gamma(\alpha_{1}+\alpha_{2})} \cdot x^{\alpha_{1}+\alpha_{2}-1}
& = &
\displaystyle \int_{y\in (0,x)} y^{\alpha_{1}-1} \cdot (x-y)^{\alpha_{2}-1} \intd y.
\end{array} \]


\noindent This equation follows by using a suitable substitution $s$ in:
\[ \begin{array}[b]{rcl}
\displaystyle\frac{\Gamma(\alpha_{1}) \cdot \Gamma(\alpha_{2})}
      {\Gamma(\alpha_{1}+\alpha_{2})}
& \smash{\stackrel{\eqref{DirichletNormalisationEqn}}{=}} &
\displaystyle\int_{0}^{1} \textstyle
    z^{\alpha_{1}-1}\cdot (1-z)^{\alpha_{2}-1} \intd z
\\
& = &
\displaystyle\int_{s(0)}^{s(x)} \textstyle
    z^{\alpha_{1}-1}\cdot (1-z)^{\alpha_{2}-1} \intd z
   \qquad \mbox{for } s(y) = \frac{y}{x}
\\[+0.8em]
& = &
\displaystyle\int_{0}^{x} \textstyle
    s(y)^{\alpha_{1}-1}\cdot (1-s(y))^{\alpha_{2}-1} \cdot s'(y) \intd y
\\[+0.8em]
& = &
\displaystyle\int_{0}^{x} \textstyle
    (\frac{y}{x})^{\alpha_{1}-1}\cdot (\frac{x-y}{x})^{\alpha_{2}-1} \cdot 
    \frac{1}{x} \intd y
\\
& = &
\displaystyle\frac{1}{x^{\alpha_{1}+\alpha_{2}-1}}\cdot\int_{0}^{x} \textstyle
    y^{\alpha_{1}-1}\cdot (x-y)^{\alpha_{2}-1} \intd y.
\end{array} \]

\auxproof{
An earlier proof attempt without using the normalisation equation
but using the binomial theorem resulted in a proof obligation:
\[ \begin{array}{rcl}
\displaystyle\frac{1}{n\cdot {{m+n}\choose n}}
& = &
\displaystyle\sum_{k=0}^{k=m} {m \choose k}\cdot 
   \frac{(-1)^{k}}{n+k}
\end{array} \]

\noindent I have tested this for arbitrary $n,m$ in Python, and it holds
each time, but I could not find a proof. Is this a known formula?

\begin{python}
print("\nBinomial test formula")

import scipy.special as special

n = random.randint(1,20)
m = random.randint(0,10)

print(n,m)

print( n * special.binom(n+m, n) )

print( 1 / sum([special.binom(m,k) * (-1)**k * (1/(n+k)) 
    for k in range(m+1)]) )
\end{python}
}

\item We use Fubini (F) in the following line of equations, with indicator
  functions $\indic{S}$ for a subset $S$; it sends elements $x\in S$
  to 1 and $x\not\in S$ to $0$.
\[ \hspace*{-3em}\begin{array}[b]{rcl}
\lefteqn{\displaystyle\int_{\vec{x}\in N} d_{n-1}\big(\aneMlt(h)(\vec{\alpha})\big)
   (\vec{x}) \intd \vec{x}}
\\
& = &
\displaystyle\int_{\vec{x}\in [0,1]^{n-1}} \indic{N}(\vec{x}) \cdot
   d_{n-1}\big(\alpha_{1}+\alpha_{2}, \alpha_{3}, \ldots, \alpha_{n})\big)
   (\vec{x}) \intd \vec{x}
\\
& \smash{\stackrel{\eqref{OneSumEqn}}{=}} &
\displaystyle\int_{\vec{x}\in [0,1]^{n-1}} \indic{N}(\vec{x}) \cdot
   \int_{y\in [0,1]} \indic{(0,x_{1})}(y) \cdot
   d_{n}\big(\vec{\alpha})\big)(y, x_{1}-y, x_{2}, \ldots, x_{n}) 
   \rlap{$\intd y \intd \vec{x}$}
\\
& \smash{\stackrel{\text{(F)}}{=}} &
\displaystyle\int_{(\vec{x},y)\in [0,1]^{n}} 
   \indic{N}(\vec{x}) \cdot \indic{(0,x_{1})}(y) \cdot
   d_{n}\big(\vec{\alpha})\big)(y, x_{1}-y, x_{2}, \ldots, x_{n}) 
   \intd (\vec{x},y)
\\
& = &
\displaystyle\int_{\vec{y}\in\aneDst(h)^{-1}(N)} 
   d_{n}(\vec{\alpha})(\vec{y}) \intd \vec{y}.
\end{array} \eqno{\QEDbox} \]
\end{enumerate}
\end{proof}

\auxproof{
Once we know the basic aggregation form~\eqref{OneSumEqn} we can see
how to combine different summations of hyperparameters $\alpha_i$, as
in:
\[ \begin{array}{rcl}
\lefteqn{d_{n-2}(\alpha_{1}+\alpha_{2}+\alpha_{3}, \alpha_{3}, \ldots, \alpha_{n})
   (x_{3}, \ldots, x_{n})}
\\
& = &
\displaystyle\int_{0}^{x_{3}} \int_{0}^{x_{3}-y_{1}} 
   d_{n}(\alpha_{1}, \alpha_{2}, \alpha_{3}, \ldots, \alpha_{n})
   (y_{1}, y_{2}, x_{3} - y_{1} - y_{2}, x_{4}, \ldots, x_{n}) \intd y_{2} \intd y_{1}
\\
\lefteqn{d_{n-2}(\alpha_{1}+\alpha_{2}, \alpha_{3} + \alpha_{4}, 
   \alpha_{3}, \ldots, \alpha_{n})(x_{3}, \ldots, x_{n})}
\\
& = &
\displaystyle\int_{0}^{x_{3}} \int_{0}^{x_{4}} 
   d_{n}(\alpha_{1}, \alpha_{2}, \alpha_{3}, \ldots, \alpha_{n})
   (y_{1}, x_{3} - y_{2}, y_{2}, x_{4} - y_{2}, x_{5}, \ldots, x_{n}) \intd y_{2} \intd y_{1}
\end{array} \]

We can uniformly capture all these forms by using the funtoriality of
$\Mlt$ and $\Dst$. It gives a succint functorial reformulation of the
aggregation property of the Dirichlet functions~\cite{FrigyikKG10}.
}

Once we know the basic aggregation form~\eqref{OneSumEqn} we can
generalise by using the funtoriality of $\Mlt$ and $\Dst$. It gives a
succint functorial reformulation of the aggregation property of the
Dirichlet functions~\cite{FrigyikKG10}.

\begin{lemma}
\label{ManySumLem}
Let $h\colon n \twoheadrightarrow m$ be a surjective function.  For
$\vec{\alpha} \in \aneMlt(n)$ and $\vec{x}\in\aneDst(m)$ one has:
\begin{equation}
\label{ManySumEqn}
\begin{array}{rcl}
d_{m}\Big(\aneMlt(h)(\vec{\alpha})\Big)(\vec{x})
& = &
\displaystyle\int_{\vec{y}\in\aneDst(h)^{-1}(\vec{x})} d_{n}(\vec{\alpha})(\vec{y})
   \intd \vec{y}.
\end{array}
\end{equation}

\noindent Moreover, for a measurable subset $N\subseteq \aneDst(m)$:
\begin{equation}
\label{ManySumSubsetEqn} 
\begin{array}{rcl}
\displaystyle\int_{\vec{x}\in N} d_{m}\Big(\aneMlt(h)(\vec{\alpha})\Big)
   (\vec{x}) \intd \vec{x}
& = &
\displaystyle\int_{\vec{y}\in\aneDst(h)^{-1}(N)} 
   d_{n}(\vec{\alpha})(\vec{y}) \intd \vec{y}.
\end{array}
\end{equation}
\end{lemma}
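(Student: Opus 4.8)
The plan is to reduce Lemma~\ref{ManySumLem} to the single-summation case established in Lemma~\ref{OneSumLem}(2), by decomposing an arbitrary surjection $h\colon n\twoheadrightarrow m$ into a composite of the elementary ``merge two adjacent points'' surjections and composing the corresponding integral identities. First I would observe that~\eqref{ManySumSubsetEqn} is the ``integrated'' form of~\eqref{ManySumEqn}: once~\eqref{ManySumEqn} is known pointwise in $\vec{x}$, we multiply both sides by $\indic{N}$ and integrate over $\aneDst(m)$, then use Fubini to pull the inner integral out --- exactly as in the proof of Lemma~\ref{OneSumLem}(2) --- to get~\eqref{ManySumSubsetEqn}. (Alternatively, and perhaps more cleanly, I would prove~\eqref{ManySumSubsetEqn} directly by the induction below and then recover~\eqref{ManySumEqn} as the density version.) So the real content is~\eqref{ManySumEqn} for a general surjection $h$.

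The key step is a factorization argument. Any surjection $h\colon n\twoheadrightarrow m$ with $n>m$ must identify at least two elements, so it factors as $h = h' \after g$ where $g\colon n \twoheadrightarrow n-1$ merges a single pair of elements of $n$ and $h'\colon n-1\twoheadrightarrow m$ is again surjective; by relabelling we may assume $g$ is exactly the map of Lemma~\ref{OneSumLem}(2) (merging the first two coordinates), since $\aneMlt$, $\aneDst$ and the $d$'s are all symmetric under permutation of coordinates. Now I would induct on $n-m$. The base case $n=m$ has $h$ a bijection, and naturality of $d$ under bijections (immediate from~\eqref{DirichletPDFEqn}, which is symmetric in the $\alpha_i$/$x_i$) gives~\eqref{ManySumEqn} with the fibre $\aneDst(h)^{-1}(\vec{x})$ a single point. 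For the inductive step, write $h = h'\after g$ as above, apply the induction hypothesis to $h'$ (integrated form~\eqref{ManySumSubsetEqn}, on the measurable set $N = \aneDst(g)^{-1}$ of a neighbourhood, or directly the density form), and apply Lemma~\ref{OneSumLem}(2) for the elementary map $g$; chaining the two integral identities and using that $\aneDst(h)^{-1} = \aneDst(g)^{-1}\after\aneDst(h')^{-1}$ on subsets, together with Fubini to combine the nested fibre-integrals into a single integral over $\aneDst(h)^{-1}(\vec x)$, yields~\eqref{ManySumEqn} for $h$.

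The main obstacle I anticipate is \emph{bookkeeping of the fibre measures}, not any deep analytic difficulty. The fibres $\aneDst(h)^{-1}(\vec{x})$ are affine slices of the simplex of varying dimension, and one must be careful that the iterated integrals produced by composing Lemma~\ref{OneSumLem}(2) with itself reassemble, via Fubini, into exactly the integral $\int_{\vec y\in\aneDst(h)^{-1}(\vec x)} d_n(\vec\alpha)(\vec y)\intd\vec y$ with the correct Lebesgue measure on the fibre --- i.e.\ that there are no stray Jacobian factors. The cleanest way to manage this is to work throughout with the integrated form~\eqref{ManySumSubsetEqn} (where the ``$\intd\vec y$'' over a fibre is replaced by honest Lebesgue integration over a subset of the top-dimensional simplex $\aneDst(n)$, as in Lemma~\ref{OneSumLem}(2)), carry the induction there using only Fubini and $\indic{\aneDst(h)^{-1}(N)} = \indic{N}\after\aneDst(h)$, and only at the very end differentiate to obtain the pointwise statement~\eqref{ManySumEqn}. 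A secondary point to check is that merging coordinates keeps us inside $\aneMlt$ (the merged hyperparameter $\alpha_i+\alpha_j$ is still $>0$) and that the relevant permutations preserve ``full support'', so that all the $\aneMlt(-)$ and $\aneDst(-)$ actions used are well-defined despite the functoriality subtleties flagged earlier for these subfunctors.
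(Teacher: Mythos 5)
Your proposal is correct and follows essentially the same route as the paper, which establishes the single-merge case in Lemma~\ref{OneSumLem} and then obtains Lemma~\ref{ManySumLem} by ``generalising via functoriality'', i.e.\ exactly the factorisation of an arbitrary surjection into elementary merges (plus permutations) and the composition of the resulting integral identities, using $\aneDst(h)^{-1}=\aneDst(g)^{-1}\after\aneDst(h')^{-1}$. Your suggestion to carry the induction in the integrated form~\eqref{ManySumSubsetEqn} and only then read off the pointwise statement~\eqref{ManySumEqn} is a sensible and faithful elaboration of what the paper leaves implicit.
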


\auxproof{
\noindent\textbf{Example.} Consider $h \colon 6 \rightarrow 3$ given by:
$$h(1)=1 \quad
h(2)=2 \quad
h(3)=1 \quad
h(4)=3 \quad
h(5)=1 \quad
h(6)=2$$

\noindent For $\vec{x} = (x_{1},x_{2},x_{3}) \in \Dst(3)$ and
$\vec{y} = (y_{1}, y_{2}, y_{3}, y_{4}, y_{5}, y_{6}) \in \Dst(6)$ we
have:
\[ \begin{array}{rcl}
\vec{y} \in \aneDst(h)^{-1}(\vec{x})
& \Longleftrightarrow &
\aneDst(h)(\vec{y}) = \vec{x}
\\
& \Longleftrightarrow &
\left\{\begin{array}{l}
x_{1} = \sum_{i\in f^{-1}(1)} y_{i} = y_{1} + y_{3} + y_{5}
\\
x_{2} = \sum_{i\in f^{-1}(2)} y_{i} = y_{2} + y_{6}
\\
x_{3} = \sum_{i\in f^{-1}(3)} y_{i} = y_{4}.
\end{array}\right.
\end{array} \]

\noindent For $\vec{\alpha} \in \aneMlt(6)$ we have:
\[ \begin{array}{rcl}
\aneMlt(h)(\vec{\alpha})
& = &
\big(\, \aneMlt(h)(\vec{\alpha})(j) \,\big)_{j\in 3}
\\
& = &
\big(\, \sum_{i\in f^{-1}(1)}\alpha_{i}, \, \sum_{i\in f^{-1}(2)}\alpha_{i}, \,
   \sum_{i\in f^{-1}(3)}\alpha_{i} \,\big)
\\
& = &
\big(\, \alpha_{1} + \alpha_{3} + \alpha_{5}, \, 
   \alpha_{2}+\alpha_{6}, \, \alpha_{4} \,\big).
\end{array} \]

\noindent Equation~\eqref{ManySumEqn} can now be written as:
\[ \begin{array}{rcl}
\lefteqn{d_{3}\big(\alpha_{1} + \alpha_{3} + \alpha_{5}, \,
   \alpha_{2}+\alpha_{6}, \, \alpha_{4} \,\big)(\vec{x})}
\\
& = &
\displaystyle \int_{0}^{x_{1}} \int_{0}^{x_{1}-z_{1}} \int_{0}^{x_{2}}
   d_{6}(\vec{\alpha})
   (z_{1}, z_{3}, z_{2}, x_{3}, x_{1} - z_{1} - z_{2}, x_{2} - z_{3})
   \intd z_{3} \intd z_{2} \intd z_{1}
\end{array} \]

\noindent Indeed we see that for the (second sequence of) arguments
of $d_6$, call them $\vec{y}$, one has:
$$y_{1} + y_{3} + y_{5} = x_{1} \qquad
y_{2} + y_{6} = x_{2} \qquad
y_{4} = x_{3}$$

\noindent so that $\vec{y} \in \aneDst(h)(\vec{x})$, as described above.
}

\auxproof{
Our next aim is to obtain a naturality result for Dirichlet, as in
Lemma~\ref{FrequentistNatLem} for MLE, for the Dirichlet distribution
functions $\Dir_{n} = \mathcal{I} \after d_{n} \colon \aneMlt(n)
\rightarrow \Giry\big(\aneDst(n)\big)$
from~\eqref{DirichletDistributionEqn}.
}

We thus arrive at the main result of this section.

\begin{lemma}
The Dirichlet maps $\Dir_n$ are natural for \emph{surjective}
functions, that is, for a surjective function $h\colon n
\twoheadrightarrow m$ the following diagram commutes.
\[ \vcenter{\xymatrix@C+2pc@R-1pc{
\aneMlt(n)\ar[r]^-{\aneMlt(h)}\ar[d]_{\Dir_{n}}
   & \aneMlt(m)\ar[d]^{\Dir_{m}}
\\
\Giry\big(\aneDst(n)\big)\ar[r]_-{\Giry(\aneDst(h))}
   & \Giry\big(\aneDst(m)\big)
}} \] 


\end{lemma}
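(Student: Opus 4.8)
The plan is a short diagram chase that offloads all the analytic work onto the subset form~\eqref{ManySumSubsetEqn} of Lemma~\ref{ManySumLem}. Fix a surjection $h\colon n\twoheadrightarrow m$ and a point $\vec\alpha\in\aneMlt(n)$. Note first that both composites in the square genuinely land in $\Giry\big(\aneDst(m)\big)$: $\aneMlt(h)(\vec\alpha)\in\aneMlt(m)$ is well-defined because $h$ is surjective (so the sums $\sum_{i\in h^{-1}(j)}\alpha_i$ are all nonzero), and likewise $\aneDst(h)$ maps $\aneDst(n)$ into $\aneDst(m)$ only for surjective $h$. Since two measures on $\aneDst(m)$ coincide as soon as they agree on every measurable subset, it suffices to fix a measurable $N\subseteq\aneDst(m)$ and evaluate both composites at $N$.

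Next I would unfold the two composites through the definitions. Going top-then-right, using $\Dir_m=\mathcal{I}\after d_m$ from~\eqref{DirichletDistributionEqn} and the definition of $\mathcal{I}$ in~\eqref{PDFtoGiryEqn},
\[
\big(\Dir_m\after\aneMlt(h)\big)(\vec\alpha)(N)
\;=\;
\mathcal{I}\big(d_m(\aneMlt(h)(\vec\alpha))\big)(N)
\;=\;
\int_{\vec x\in N} d_m\big(\aneMlt(h)(\vec\alpha)\big)(\vec x)\intd\vec x .
\]
Going left-then-bottom, using the description~\eqref{GiryFunEqn} of $\Giry$ on maps as push-forward along $\aneDst(h)$, together again with $\Dir_n=\mathcal{I}\after d_n$ and~\eqref{PDFtoGiryEqn},
\[
\big(\Giry(\aneDst(h))\after\Dir_n\big)(\vec\alpha)(N)
\;=\;
\Dir_n(\vec\alpha)\big(\aneDst(h)^{-1}(N)\big)
\;=\;
\int_{\vec y\in\aneDst(h)^{-1}(N)} d_n(\vec\alpha)(\vec y)\intd\vec y .
\]
The two right-hand sides are exactly the two sides of equation~\eqref{ManySumSubsetEqn} in Lemma~\ref{ManySumLem}, so the square commutes.

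The real mathematical content has therefore already been absorbed into Lemma~\ref{ManySumLem} (and, beneath it, the aggregation identity~\eqref{OneSumEqn}); what is left here is bookkeeping, and there is no genuine obstacle. The only points I would be careful to record are: (i) that $\aneDst(h)\colon\aneDst(n)\to\aneDst(m)$ is measurable, so that $\aneDst(h)^{-1}(N)$ is measurable and the push-forward in~\eqref{GiryFunEqn} is legitimate --- this holds because $\aneDst(h)$ acts coordinatewise by $\vec y\mapsto\big(\sum_{i\in h^{-1}(j)}y_i\big)_{j\in m}$, the restriction of a linear, hence continuous, map $\R^n\to\R^m$; and (ii) that the Lebesgue integrals over the simplices $\aneDst(n)$ and $\aneDst(m)$ are read in the fixed coordinate parametrisation used throughout Section~\ref{BayesianSec} (one coordinate dropped, as in Lemma~\ref{OneSumLem}), which is exactly the setting in which~\eqref{ManySumSubsetEqn} was established. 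With these conventions in place, the proof is essentially a transcription of Lemma~\ref{ManySumLem} through the definitions of $\Dir$ and of the Giry functor.
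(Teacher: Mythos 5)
Your proof is correct and is essentially identical to the paper's own argument: both evaluate the two composites on an arbitrary measurable $N\subseteq\aneDst(m)$, unfold them via $\Dir=\mathcal{I}\after d$, \eqref{PDFtoGiryEqn} and \eqref{GiryFunEqn}, and then invoke the subset form \eqref{ManySumSubsetEqn} of Lemma~\ref{ManySumLem} to identify the two integrals. The extra remarks on measurability of $\aneDst(h)$ and on the simplex parametrisation are sensible bookkeeping but do not change the substance.
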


\begin{proof}

\noindent For $\vec{\alpha} \in \aneMlt(n)$
and $N\subseteq \aneDst(m)$ we have:
\[ \begin{array}[b]{rcl}
\big(\Dir_{m} \after \aneMlt(h)\big)(\vec{\alpha})(N)
& = &
\displaystyle \int_{\vec{x}\in N} d_{m}\big(\aneMlt(h)(\vec{\alpha})\big)(\vec{x}) 
   \intd \vec{x}
\\[+0.8em]
& \smash{\stackrel{\eqref{ManySumSubsetEqn}}{=}} &
\displaystyle\int_{\vec{y}\in\aneDst(h)^{-1}(N)} d_{n}(\vec{\alpha})(\vec{y})
   \intd \vec{y}
\\
& = &
\mathcal{I}\big(d_{n}(\vec{\alpha})\big)\big(\aneDst(h)^{-1}(N)\big)
\\
& \smash{\stackrel{\eqref{GiryFunEqn}}{=}} &
\Giry\big(\aneDst(h)\big)\big(\Dir_{n}(\vec{\alpha})\big)(N)
\\
& = &
\big(\Giry\big(\aneDst(h)\big) \after \Dir_{n}\big)(\vec{\alpha})(N).
\end{array} \eqno{\QEDbox} \]
\end{proof}

\auxproof{
In Lemma~\ref{FrequentistNatLem} we have used naturality to illustrate how
taking marginals (of tables and of distributions) commutes with the
likelihood function. The corresponding result in the Bayesian setting
takes a slightly different form, because of the use of \emph{two}
functors $\Giry$ and $\aneDst$ in the composite $\Giry\aneDst$.
It now amounts to:
\[ \vcenter{\xymatrix@C+2pc{
\aneMlt(n_{1}\times n_{2})\ar[r]^-{\aneMlt(\pi_{i})}\ar[d]_{\Dir_{n_{1}\times n_{2}}}
   & \aneMlt(n_{i})\ar[d]^{\Dir_{n_i}}
\\
\Giry\big(\aneDst(n_{1}\times n_{2})\big)\ar[r]^-{\Giry(\aneDst(\pi_{i}))}
   & \Giry\big(\aneDst(n_{i})\big)
}} \] 

\noindent More explicitly, for a table $\vec{\alpha} =
\sum_{ij}\alpha_{ij}\ket{ij} \in \aneMlt(n_{1}\times n_{2})$ and for a
measurable subset $N\subseteq \aneDst(n_{1})$ we have:
\[ \begin{array}{rcl}
\displaystyle\int_{\vec{x}\in N} d_{n_1}\big(\textstyle
   \sum_{i} (\sum_{j}\alpha_{ij})\ket{i}\big)(\vec{x})\intd \vec{x}
& = &
\displaystyle\int_{\vec{y}\in \aneDst(\pi_{1})^{-1}(N)} d_{n_{1}\times n_{2}}
   (\vec{\alpha})(\vec{y})\intd \vec{y}.
\end{array} \]

\noindent This is simply a special case of
Equation~\eqref{ManySumSubsetEqn}.

\auxproof{
\[ \begin{array}{rcl}
\lefteqn{d_{n\times m}\big(\dst(\vec{\alpha}, \vec{\beta})\big)
   \big(\dst(\vec{x},\vec{y})\big)}
\\
& = &
\\
& = &
\left(\displaystyle\frac{\Gamma(\sum_{i}\alpha_{i})}{\prod_{i}\Gamma(\alpha_{i})}
   \cdot \textstyle\prod_{i} x_{i}^{\alpha_{i}-1}\right) \cdot
\left(\displaystyle\frac{\Gamma(\sum_{j}\beta_{j})}{\prod_{i}\Gamma(\beta_{j})}
   \cdot \textstyle\prod_{j} x_{j}^{\beta_{j}-1}\right) \cdot
\\
& = &
d_{n}(\vec{\alpha})(\vec{x})\cdot d_{m}(\vec{\beta})(\vec{y})
\end{array} \]

\[ \xymatrix{
\aneMlt(n)\times\aneMlt(m)\ar[rr]^-{\dst}\ar[d]_{\Dir_{n}\times\Dir_{m}} 
    & & \aneMlt(n\times m)\ar[d]^-{\Dir_{n\times m}}
\\
\Giry\big(\aneDst(n)\big)\times\Giry\big(\aneDst(m)\big)\ar[r]^-{\dst} 
   & \Giry\big(\aneDst(n)\times\aneDst(m)\big)\ar[r]^-{\Giry(\dst)}
   & \Giry\big(\aneDst(n\times m)\big)
} \]

\noindent For a measurable subset $K\subseteq\aneDst(n\times m)$,
\[ \begin{array}{rcl}
\lefteqn{\big(\Giry(\dst) \after \dst \after (\Dir_{n}\times\Dir_{m})\big)
   \textstyle(\sum_{i}\alpha_{i}\ket{i}, \sum_{j}\beta_{j}\ket{j})(K)}
\\
& = &
\dst\big(\Dir_{n}(\sum_{i}\alpha_{i}\ket{i}), 
   \Dir_{m}(\sum_{j}\beta_{j}\ket{j})\big)(\dst^{-1}(K))
\\
& = &
\\
& = &
\displaystyle\int_{\vec{z}\in K} \textstyle
   d_{n\times m}\big(\sum_{ij}\alpha_{i}\cdot\beta_{j}\ket{ij}\big)(\vec{z}) 
   \intd\vec{z}
\\
& = &
\Dir_{n\times m}\big(\sum_{ij}\alpha_{i}\cdot\beta_{j}\ket{ij}\big)(K)
\\
& = &
\big(\Dir_{n\times m} \after \dst\big)
   (\sum_{i}\alpha_{i}\ket{i}, \sum_{j}\beta_{j}\ket{j})(K).
\end{array} \]
}
}

\section{A logical perspective on learning}\label{LogicSec}

This section examines frequentist and Bayesian learning from a logical
perspective, using the notions of validity $\models$ and conditioning.
Equation~\eqref{DirichletPointEqn} expresses that the expected value
of data element $i$ under the Bayesian interpretation coincides with
the probability of $i$ under a frequentist interpretation. We shall
extend this correspondence in terms of predicates and their validity
$\models$, as explained at the end of Section~\ref{DiscPrereqSec}
and~\ref{ContPrereqSec}.

For a predicate $p\colon n \rightarrow [0,1]$ we can write $(-)\models
p \colon \Dst(n) \rightarrow [0,1]$ for the map $\omega \mapsto
\omega\models p$. It is in fact the Kleisli extension of $p$. This map
$\widehat{p} \coloneqq (-)\models p$ is now a predicate on $\Dst(n)$
from the continuous perspective. Hence we can look at $\widehat{p}$'s
validity. Below we relate validity $\models$ for the discrete
probability distribution $\ell_{n}(\vec{\alpha})$ to validity
$\models$ for the continuous probability distribution
$\Dir_{n}(\vec{\alpha})$ via an adaptation of the predicate, from $p$
to $\widehat{p}$. It provides a fancy extension
of~\eqref{DirichletPointEqn}.

\begin{proposition}
\label{DirPredProp}
For each $\vec{\alpha}\in\aneMlt(n)$ and predicate $p\colon n
\rightarrow [0,1]$ one has:
\[ \begin{array}{rcl}
\ell_{n}(\vec{\alpha}) \models p
& \;=\; &
\Dir_{n}(\vec{\alpha}) \models \widehat{p}
\end{array}
\qquad\mbox{\ie}\qquad
\vcenter{\xymatrix@C+1pc@R-1pc{
\aneMlt(n)\ar[r]^-{\ell_n}\ar[d]_{\Dir_n} & \aneDst(n)\ar[d]^-{(-)\models p}
\\
\Giry\big(\aneDst(n)\big)\ar[r]_-{(-)\models\widehat{p}} & [0,1]
}} \]

\noindent where $\widehat{p} \coloneqq (-)\models p$ is a predicate
on $\aneDst(n)$.
\end{proposition}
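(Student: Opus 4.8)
The plan is to unfold both sides of the claimed equality according to the definitions of discrete and continuous validity, and then to reduce everything to the pointwise identity~\eqref{DirichletPointEqn}, which already relates $\ell_{n}$ to the Dirichlet pdf's $d_{n}$. The computation is short, so the proof will essentially be a single chain of equalities together with a linearity-of-integration step.

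First I would rewrite the left-hand side. By definition of $\models$ for discrete distributions, $\ell_{n}(\vec{\alpha}) \models p = \sum_{i\in n} \ell_{n}(\vec{\alpha})(i)\cdot p(i)$, which by~\eqref{FrequentistEqn} equals $\sum_{i} \frac{\alpha_{i}}{\alpha}\,p(i)$ with $\alpha = \sum_{j}\alpha_{j}$. Next I would unfold the right-hand side. Since $\Dir_{n}(\vec{\alpha}) = \mathcal{I}\big(d_{n}(\vec{\alpha})\big)$ is the measure on $\aneDst(n)$ with density $d_{n}(\vec{\alpha})$, and since $\widehat{p}(\vec{x}) = \vec{x}\models p = \sum_{i} x_{i}\,p(i)$ for $\vec{x} = (x_{1}, \ldots, x_{n})\in\aneDst(n)$, the continuous validity is the integral
\[ \Dir_{n}(\vec{\alpha}) \models \widehat{p}
 \;=\;
 \int_{\vec{x}\in\aneDst(n)} \Big(\textstyle\sum_{i} x_{i}\,p(i)\Big)\cdot d_{n}(\vec{\alpha})(\vec{x})\intd\vec{x}
 \;=\;
 \sum_{i} p(i)\cdot\int_{\vec{x}\in\aneDst(n)} x_{i}\cdot d_{n}(\vec{\alpha})(\vec{x})\intd\vec{x}, \]
where the second step is linearity of the integral over the finite sum. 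Now I invoke~\eqref{DirichletPointEqn}, which says precisely that the inner integral equals $\ell_{n}(\vec{\alpha})(i) = \frac{\alpha_{i}}{\sum_{j}\alpha_{j}}$, so the right-hand side becomes $\sum_{i} p(i)\cdot\frac{\alpha_{i}}{\alpha}$, matching the left-hand side. Reading the chain backwards and forwards gives commutativity of the square on the right of the statement.

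There is no substantial obstacle here; the only points worth a sentence are the well-definedness checks. I would note that $\widehat{p}$ takes values in $[0,1]$, since for $\vec{x}\in\aneDst(n)$ the value $\widehat{p}(\vec{x}) = \sum_{i} x_{i}\,p(i)$ is a convex combination of the numbers $p(i)\in[0,1]$; hence $\widehat{p}$ is a genuine (measurable, bounded) predicate on $\aneDst(n)$, which makes the diagram typecheck and justifies both the use of $(-)\models\widehat{p}$ and the interchange of the finite sum with the Lebesgue integral. With these remarks in place the proof is just the displayed computation.
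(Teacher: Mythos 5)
Your proof is correct and follows essentially the same route as the paper's: unfold $\Dir_{n}(\vec{\alpha})\models\widehat{p}$ as an integral against the density $d_{n}(\vec{\alpha})$, exchange the finite sum with the integral, and apply Equation~\eqref{DirichletPointEqn} to reduce each term to $\ell_{n}(\vec{\alpha})(i)$. The extra remark that $\widehat{p}$ is a genuine $[0,1]$-valued predicate is a harmless (and reasonable) addition that the paper leaves implicit.
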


\begin{proof}
We unpack the definitions and use Equation~\eqref{DirichletPointEqn}
in a crucial manner:
\[ \begin{array}[b]{rcl}
\Dir_{n}(\vec{\alpha}) \models \big((-)\models p\big)
& = &
\displaystyle\int \big((-)\models p\big) \intd\, \Dir_{n}(\vec{\alpha})
\\
& = &
\displaystyle\int (\vec{x} \models p) \cdot 
   d_{n}(\vec{\alpha})(\vec{x}) \intd \vec{x}
\\
& = &
\displaystyle\int \textstyle \big(\sum_{i}\, p(i)\cdot x_{i}\big) \cdot 
   d_{n}(\vec{\alpha})(\vec{x}) \intd \vec{x}
\\
& = &
{\displaystyle\sum}_{i}\; p(i)\cdot \displaystyle\int x_{i} \cdot 
   d_{n}(\vec{\alpha})(\vec{x}) \intd \vec{x}
\\[+0.8em]
& \smash{\stackrel{\eqref{DirichletPointEqn}}{=}} &
{\displaystyle\sum}_{i}\; p(i)\cdot \displaystyle \frac{\alpha_i}{\alpha}
   \qquad \mbox{where } \alpha \coloneqq \textstyle \sum_{i}\alpha_{i}
\\[+0.8em]
& \smash{\stackrel{\eqref{FrequentistEqn}}{=}} & 
{\displaystyle\sum}_{i}\; p(i)\cdot \ell_{n}(\vec{\alpha})(i)
\hspace*{\arraycolsep}=\hspace*{\arraycolsep}
\ell_{n}(\vec{\alpha}) \models p.
\end{array} \eqno{\QEDbox} \]
\end{proof}

The Bayesian approach to learning can handle additional data via
conditioning. This will be made precise below, using a logical
formulation that is characteristic for conjugate priors,
see~\cite{Jacobs17d}. For numbers $n$ and $i\in n$ we write
$\indic{\{i\}}\colon n \rightarrow [0,1]$ for the singleton predicate
that is $1$ on $i\in n$ and $0$ elsewhere.

\begin{theorem}
\label{DirConditionProp}
For $\vec{\alpha} = (\alpha_{1}, \ldots, \alpha_{n})\in\aneMlt(n)$ and
$i\in n$ write $\incr{\vec{\alpha}}{i}$ for the sequence $(\alpha_{1},
\ldots, \alpha_{i}+1,\ldots, \alpha_{n})$ in which the $i$-th entry is
is incremented by one. A Dirichlet distribution updated with a point
$i$ observation is the same as this distribution with the $i$-th
hyperparameter incremented by one:
\[ \begin{array}{rcl}
\Dir_{n}(\incr{\vec{\alpha}}{i})
& = &
\Dir_{n}(\vec{\alpha})\big|_{\widehat{\indic{\{i\}}}}.
\end{array} \]
\end{theorem}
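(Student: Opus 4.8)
The plan is to verify the identity directly by computing both sides as probability density functions on $\aneDst(n)$ and checking they agree pointwise. By definition~\eqref{DirichletDistributionEqn}, $\Dir_n(\incr{\vec{\alpha}}{i}) = \mathcal{I}(d_n(\incr{\vec{\alpha}}{i}))$, so the left-hand side is the measure with pdf $d_n(\incr{\vec{\alpha}}{i})$. For the right-hand side, I would first unpack what conditioning with $\widehat{\indic{\{i\}}}$ does: recall $\widehat{\indic{\{i\}}} = (-)\models\indic{\{i\}}$, which is the predicate on $\aneDst(n)$ sending a distribution $\vec{x} = (x_1,\dots,x_n)$ to $\vec{x}\models\indic{\{i\}} = \sum_j x_j \cdot \indic{\{i\}}(j) = x_i$. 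So the updating predicate is simply the coordinate function $\vec{x}\mapsto x_i$.

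First I would compute the normalisation constant $\Dir_n(\vec{\alpha})\models\widehat{\indic{\{i\}}}$, which appears in the denominator of the conditioning formula~\eqref{ContConditionEqn}. This is exactly $\int_{\vec{x}\in\aneDst(n)} x_i \cdot d_n(\vec{\alpha})(\vec{x})\intd\vec{x}$, which by~\eqref{DirichletPointEqn} equals $\frac{\alpha_i}{\sum_j\alpha_j} = \ell_n(\vec{\alpha})(i)$. Next, using~\eqref{ContConditionEqn}, the conditioned measure $\Dir_n(\vec{\alpha})|_{\widehat{\indic{\{i\}}}}$ sends a measurable $M$ to $\frac{\int_M x_i \cdot d_n(\vec{\alpha})(\vec{x})\intd\vec{x}}{\alpha_i/(\sum_j\alpha_j)}$, so it is the measure given by the pdf $\vec{x}\mapsto \frac{\sum_j\alpha_j}{\alpha_i}\cdot x_i\cdot d_n(\vec{\alpha})(\vec{x})$. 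It then remains to check that this pdf equals $d_n(\incr{\vec{\alpha}}{i})(\vec{x})$.

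The core of the argument is this last pdf identity, which is a short manipulation of the explicit formula~\eqref{DirichletPDFEqn}. Writing $\alpha = \sum_j\alpha_j$, the expression $\frac{\alpha}{\alpha_i}\cdot x_i\cdot d_n(\vec{\alpha})(\vec{x})$ equals $\frac{\alpha}{\alpha_i}\cdot x_i \cdot \frac{\Gamma(\alpha)}{\prod_j\Gamma(\alpha_j)}\cdot\prod_j x_j^{\alpha_j-1}$, and multiplying $x_i$ into the product turns $x_i^{\alpha_i-1}$ into $x_i^{\alpha_i}$; meanwhile $\frac{\alpha}{\alpha_i}\cdot\frac{\Gamma(\alpha)}{\Gamma(\alpha_i)} = \frac{\Gamma(\alpha+1)}{\Gamma(\alpha_i+1)}$ using the recurrence $\Gamma(k+1)=k\Gamma(k)$ for both the numerator ($\alpha\cdot\Gamma(\alpha)=\Gamma(\alpha+1)$) and the $i$-th denominator factor ($\alpha_i\cdot\Gamma(\alpha_i)=\Gamma(\alpha_i+1)$). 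Since $\incr{\vec{\alpha}}{i}$ has entries summing to $\alpha+1$ and $i$-th entry $\alpha_i+1$, this is precisely $\frac{\Gamma(\sum_j(\incr{\vec{\alpha}}{i})_j)}{\prod_j\Gamma((\incr{\vec{\alpha}}{i})_j)}\cdot\prod_j x_j^{(\incr{\vec{\alpha}}{i})_j-1} = d_n(\incr{\vec{\alpha}}{i})(\vec{x})$.

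I do not anticipate a serious obstacle here: the proof is essentially bookkeeping with the Gamma-function recurrence, provided one is careful that all $\alpha_j$ lie in $\pNNO$ so that $\incr{\vec{\alpha}}{i}\in\aneMlt(n)$ and all Gamma values are well-defined, and that $\vec{x}\in\aneDst(n)$ so $x_i>0$ (no division-by-zero issue when passing between pdfs). The only conceptual point worth stating carefully is that two measures in $\Giry(\aneDst(n))$ coincide once their defining pdfs coincide almost everywhere — which is immediate here since they agree everywhere — so matching the pdfs suffices to conclude the displayed equality of measures.
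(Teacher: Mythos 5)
Your proposal is correct and follows essentially the same route as the paper's own proof: compute the normalising constant $\Dir_{n}(\vec{\alpha})\models\widehat{\indic{\{i\}}} = \nicefrac{\alpha_{i}}{\alpha}$ via~\eqref{DirichletPointEqn}, then absorb the factor $\frac{\alpha}{\alpha_{i}}\cdot x_{i}$ into the Dirichlet pdf using the recurrence $\Gamma(k+1)=k\cdot\Gamma(k)$. The only cosmetic difference is that you match pdfs pointwise before integrating, whereas the paper carries the measurable set $M$ through the whole computation; the underlying manipulation is identical.
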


\begin{proof}
Write $\alpha = \sum_{i}\alpha_{i}$. By Proposition~\ref{DirPredProp} we have:
\[ \begin{array}{rcccccl}
\Dir_{n}(\vec{\alpha}) \models \widehat{\indic{\{i\}}}
& \;=\; &
\ell_{n}(\vec{\alpha}) \models \indic{\{i\}}
& \;=\; &
\ell_{n}(\vec{\alpha})(i)
& \;=\; &
\displaystyle\frac{\alpha_i}{\alpha}.
\end{array} \]

\noindent Thus, for a measurable subset $M\subseteq\aneDst(n)$,
\[ \begin{array}[b]{rcl}
\Dir_{n}(\vec{\alpha})\big|_{\widehat{\indic{\{i\}}}}(M)
& \smash{\stackrel{\eqref{ContConditionEqn}}{=}} &
\displaystyle \frac{\int_{M} \widehat{\indic{\{i\}}} \intd\, 
   \Dir_{n}(\vec{\alpha})}
   {\Dir_{n}(\vec{\alpha}) \models \widehat{\indic{\{i\}}}}
\\[+1em]
& = &
\displaystyle \frac{\alpha}{\alpha_i} \cdot 
   \int_{\vec{x}\in M} \big(\vec{x}\models\indic{\{i\}}\big) \cdot
   d_{n}(\vec{\alpha})(\vec{x}) \intd \vec{x}
\\[+1em]
& = &
\displaystyle \frac{\alpha}{\alpha_i} \cdot \int_{\vec{x}\in M} x_{i} \cdot
   \frac{\Gamma(\sum_{j}\alpha_{j})}{\prod_{j} \Gamma(\alpha_{j})}
    \cdot {\textstyle\prod_{j}} x_{j}^{\alpha_{j}-1} \intd \vec{x}
\\[+1em]
& = &
\displaystyle \int_{\vec{x}\in M}
   \frac{\Gamma(\sum_{j}\alpha_{j} + 1)}
    {\Gamma(\alpha_{i}+1)\cdot \prod_{j\neq i} \Gamma(\alpha_{j})}
    \cdot x_{i}^{\alpha_i}\cdot {\textstyle\prod_{j\neq i}} x_{j}^{\alpha_{j}-1} 
    \intd \vec{x}
\\[+1em]
& = &
\displaystyle \int_{\vec{x}\in M} d_{n}(\incr{\vec{\alpha}}{i})(\vec{x})
   \intd \vec{x}
\hspace*{\arraycolsep}=\hspace*{\arraycolsep}
\Dir_{n}(\incr{\vec{\alpha}}{i})(M).
\end{array} \eqno{\QEDbox} \]
\end{proof}


This update property does not work for the frequentist approach:
update of an existing distribution with point evidence trivialises the
distribution:
\[ \begin{array}{rcccccl}
\ell_{n}(\vec{\alpha})\big|_{\indic{\{i\}}}
& = &
\displaystyle\sum_{j} \frac{\ell_{n}(\vec{\alpha})(j) \cdot \indic{\{i\}}(j)}
   {\ell_{n}(\vec{\alpha}) \models \indic{\{i\}}}\bigket{j}
& = &
\displaystyle\frac{\ell_{n}(\vec{\alpha})(i)}{\ell_{n}(\vec{\alpha})(i)}
   \bigket{i}
& = &
1\ket{i}.
\end{array} \]

\section{A local Bayesian approach}\label{LocalBayesianSec}

The question that we wish to analyse in this final section is: suppose
we have multidimensional data, together with a graph structure for a
Bayesian network. Can we do Bayesian learning of the parameters for
this network also separately (locally)? Since there are so many
parameters involved, we simplify the situation to a $2\times 3$
example and proceed in a less formal manner, focusing on intuitions.

We look at a 2-dimensional table, of size $2\times 3$, given by a
multiset $\phi = \alpha_{11}\ket{11} + \alpha_{12}\ket{12} +
\alpha_{13}\ket{13} + \alpha_{21}\ket{21} + \alpha_{22}\ket{22} +
\alpha_{23}\ket{23}$. We think about this situation in terms of an
initial state $\omega\in\Dst(2)$ and a channel $c\colon 2 \rightarrow
\Dst(3)$. We like to learn the (parameters) of the three distributions
$\omega\in\Dst(2)$, $c(1) \in \Dst(3)$, $c(2) \in \Dst(3)$ separately
--- in a Bayesian manner, via Dirichlet.  The question we wish to
address is how this is related to learning a joint distribution in
$\Dst(2\times 3)$, via the parameters $\alpha_{ij}$.  Here we use that
disintegration gives translations back and forth beween
$\aneDst(2)\times\aneDst(3)\times\aneDst(3)$ and $\aneDst(2\times 3)$.
For this we use the following function.
\[ \xymatrix@C+1pc{
\aneDst(6)\ar[rr]^-{h = \tuple{h_{1},h_{2},h_{3}}} & &
   \aneDst(2)\times\aneDst(3)\times\aneDst(3)
} \]

\noindent where:
\[ \begin{array}{rcccl}
h_{1}(\vec{x})
& \coloneqq &
(\sum_{j}x_{1j}, \sum_{j}x_{2j})
& = &
(y_{1},y_{2})
\end{array}
\qquad\mbox{and}\qquad
\left\{\begin{array}{rcl}
h_{2}(\vec{x})
& \coloneqq &
(\frac{x_{11}}{y_1}, \frac{x_{12}}{y_1}, \frac{x_{13}}{y_1})
\\
h_{3}(\vec{x})
& \coloneqq &
(\frac{x_{21}}{y_2}, \frac{x_{22}}{y_2}, \frac{x_{23}}{y_2})
\end{array}\right. \]

We start by the following equations; their proofs are easy and left to
the reader.  For the above hypeparameters $\vec{\alpha}$ we write
$\beta_{1} = \alpha_{11} + \alpha_{12} + \alpha_{13}$ and $\beta_{2} =
\alpha_{21} + \alpha_{22} + \alpha_{23}$. Then:
\[ \hspace*{-0.5em}\begin{array}{rcl}
\lefteqn{\; d_{6}(\vec{\alpha})(\vec{x})}
\\
& = &
\displaystyle
   \frac{d_{2}(\beta_{1}, \beta_{2})\big(h_{1}(\vec{x})\big)}
   {y_{1}^{2}\cdot y_{2}^{2}}
\cdot \textstyle d_{3}(\alpha_{11},\alpha_{12},\alpha_{13})\big(h_{2}(\vec{x})\big)
\cdot d_{3}(\alpha_{21},\alpha_{22},\alpha_{23})\big(h_{3}(\vec{x})\big)
\\[+1em]
& = &
\displaystyle\frac{(\beta_{1}\!+\!\beta_{2}-1)
   (\beta_{1}\!+\!\beta_{2}-2)
   (\beta_{1}\!+\!\beta_{2}-3)
   (\beta_{1}\!+\!\beta_{2}-4)}
   {(\beta_{1}-1)(\beta_{1}-2)(\beta_{2}-1)(\beta_{2}-2)} \;\cdot
\\
& & \quad
d_{2}(\beta_{1}-2, \beta_{2}-2)\big(h_{1}(\vec{x})\big)
\cdot \textstyle d_{3}(\alpha_{11},\alpha_{12},\alpha_{13})\big(h_{2}(\vec{x})\big)
\cdot d_{3}(\alpha_{21},\alpha_{22},\alpha_{23})\big(h_{3}(\vec{x})\big)
\end{array} \]

\auxproof{
\[ \begin{array}{rcl}
\lefteqn{\displaystyle \frac{d_{2}(\vec{\beta})}{y_{1}^{2}\cdot y_{2}^{2}}
   \cdot d_{3}(\vec{\alpha_{1-}})(\frac{\vec{x_{1-}}}{y_{1}})
   \cdot d_{3}(\vec{\alpha_{2-}})(\frac{\vec{x_{2-}}}{y_{2}})}
\\[+1em]
& = &
\displaystyle\frac{\Gamma(\beta_{1}+\beta_{2})}
   {\Gamma(\beta_{1})\cdot\Gamma(\beta_{2})}
   \textstyle y_{1}^{\beta_{1}-3}y_{2}^{\beta_{2}-3} \cdot
\displaystyle\frac{\Gamma(\beta_{1})}{\prod_{j}\Gamma(\alpha_{1j})}
   \textstyle \prod_{j} (\frac{x_{1j}}{y_{1}})^{\alpha_{1j}-1} \cdot
\displaystyle\frac{\Gamma(\beta_{2})}{\prod_{j}\Gamma(\alpha_{2j})}
   \textstyle \prod_{j} (\frac{x_{2j}}{y_{2}})^{\alpha_{2j}-1}
\\[+1em]
& = &
\displaystyle\frac{\Gamma(\sum_{ij}\alpha_{ij})}{\prod_{ij}\Gamma(\alpha_{ij})}
   \textstyle \prod_{ij}x_{ij}^{\alpha_{ij}-1}
\\
& = &
d_{6}(\vec{\alpha})(\vec{x})
\end{array} \]

\[ \begin{array}{rcl}
\displaystyle \frac{d_{2}(\vec{\beta})(\vec{y})}{y_{1}^{2}\cdot y_{2}^{2}}
& = &
\displaystyle\frac{\Gamma(\beta_{1}+\beta_{2})}
   {\Gamma(\beta_{1})\cdot\Gamma(\beta_{2})}
   \textstyle y_{1}^{\beta_{1}-3}y_{2}^{\beta_{2}-3}
\\[+1em]
& = &
\displaystyle\frac{(\beta_{1}+\beta_{2}-1)(\beta_{1}+\beta_{2}-2)
   (\beta_{1}+\beta_{2}-3)(\beta_{1}+\beta_{2}-4)}
   {(\beta_{1}-1)(\beta_{1}-2)(\beta_{2}-1)(\beta_{2}-2)}
\\[+1em]
& & \qquad \cdot \; \displaystyle
\frac{\Gamma((\beta_{1}-2)+(\beta_{2}-2))}
   {\Gamma(\beta_{1}-2)\cdot\Gamma(\beta_{2}-2)}
   \textstyle y_{1}^{(\beta_{1}-2)-1}y_{2}^{(\beta_{2}-2)-1}
\\[+1em]
& = &
\displaystyle\frac{(\beta_{1}+\beta_{2}-1)(\beta_{1}+\beta_{2}-2)
   (\beta_{1}+\beta_{2}-3)(\beta_{1}+\beta_{2}-4)}
   {(\beta_{1}-1)(\beta_{1}-2)(\beta_{2}-1)(\beta_{2}-2)} \cdot
   d_{2}(\vec{\beta}-2)(\vec{y}).
\end{array} \]
}

\noindent The main result of this section now relates updates of
``joint'' hyperparameters to updates of ``local'' hyperparameters of
the associated channel --- via a particular instantiation. This gives
the essence of how to do parameter learning for a Bayesian network,
see~\cite[\S17.4]{KollerF09}. It builds on
Theorem~\ref{DirConditionProp}, and uses the notation $\incr{}{}$
introduced there.

\begin{theorem}
\label{JointLocalUpdateThm}
In the situation described above,
\[ \begin{array}{rcl}
\lefteqn{\Giry(h)\Big(\Dir_{6}(\incr{\vec{\alpha}}{(1,3)})\Big)}
\\
& = &
C \cdot \Dir_{2}(\incr{(\vec{\beta}-2)}{1}) \otimes
   \Dir_{3}(\incr{\vec{\alpha_{1-}}}{3}) \otimes 
   \Dir_{3}(\vec{\alpha_{2-}}),
\end{array} \]

\noindent where $\beta_{i} = \sum_{j}\alpha_{ij}$ and the constant $C$ is:
\[ \begin{array}{rcl}
C
& = &
\displaystyle\frac{(\beta_{1}\!+\!\beta_{2})(\beta_{1}\!+\!\beta_{2}-1)
   (\beta_{1}\!+\!\beta_{2}-2)
   (\beta_{1}\!+\!\beta_{2}-3)}
   {\beta_{1}(\beta_{1}-1)(\beta_{2}-1)(\beta_{2}-2)}
\end{array} \eqno{\QEDbox} \]
\end{theorem}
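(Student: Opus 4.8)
The plan is to obtain the identity from three ingredients that are already in place: the two displayed factorisations of $d_{6}$ just above (whose verification the text leaves to the reader), the change-of-variables description of the pushforward $\Giry(h)$, and Theorem~\ref{DirConditionProp}. First I would unfold $\Giry(h)$. The map $h = \tuple{h_{1},h_{2},h_{3}}\colon \aneDst(6)\rightarrow \aneDst(2)\times\aneDst(3)\times\aneDst(3)$ is a diffeomorphism, with $h^{-1}(\vec{y},\vec{u},\vec{v}) = (y_{1}u_{1},y_{1}u_{2},y_{1}u_{3},y_{2}v_{1},y_{2}v_{2},y_{2}v_{3})$; hence, by~\eqref{GiryFunEqn}, for a measure with density $f$ the pushforward $\Giry(h)(\mathcal{I}(f))$ is again given by a density, namely $\vec{z}\mapsto f(h^{-1}(\vec{z}))\cdot|\det D(h^{-1})(\vec{z})|$, and a short determinant computation (the Jacobian of $h$ is block triangular, the two nontrivial blocks coming from the $y_{1}$- and $y_{2}$-rescalings) shows this factor is a monomial in $y_{1},y_{2}$ — precisely the reciprocal of the monomial appearing in the first displayed factorisation of $d_{6}$. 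Consequently, for a general table $\vec{\gamma}\in\aneMlt(6)$ that factorisation already says $\Giry(h)(\Dir_{6}(\vec{\gamma}))$ is a triple tensor of Dirichlet distributions built from the row-sums and the normalised rows of $\vec{\gamma}$; the second displayed factorisation repackages the same data with the shifted prior parameters $\vec{\beta}-2$ and an explicit rational constant.

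Next I would instantiate at $\vec{\gamma}=\incr{\vec{\alpha}}{(1,3)}$: incrementing the $(1,3)$-hyperparameter replaces $\beta_{1}=\sum_{j}\alpha_{1j}$ by $\beta_{1}+1$, replaces the first row $\vec{\alpha_{1-}}$ by $\incr{\vec{\alpha_{1-}}}{3}$, and leaves $\beta_{2}$ and $\vec{\alpha_{2-}}$ untouched. Feeding these parameters into the \emph{second} displayed factorisation, its constant — evaluated at $(\beta_{1}+1,\beta_{2})$ — becomes exactly
\[
\frac{(\beta_{1}\!+\!\beta_{2})(\beta_{1}\!+\!\beta_{2}-1)(\beta_{1}\!+\!\beta_{2}-2)(\beta_{1}\!+\!\beta_{2}-3)}{\beta_{1}(\beta_{1}-1)(\beta_{2}-1)(\beta_{2}-2)}=C,
\]
and the three Dirichlet densities multiplying it are $d_{2}(\incr{(\vec{\beta}-2)}{1})$, $d_{3}(\incr{\vec{\alpha_{1-}}}{3})$ and $d_{3}(\vec{\alpha_{2-}})$, evaluated at $h_{1}(\vec{x}),h_{2}(\vec{x}),h_{3}(\vec{x})$ respectively. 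Pushing forward along $h$ turns these arguments into the three coordinate blocks of $\aneDst(2)\times\aneDst(3)\times\aneDst(3)$, so that the density of $\Giry(h)(\Dir_{6}(\incr{\vec{\alpha}}{(1,3)}))$ is identified with $C$ times the product density of $\Dir_{2}(\incr{(\vec{\beta}-2)}{1})\otimes\Dir_{3}(\incr{\vec{\alpha_{1-}}}{3})\otimes\Dir_{3}(\vec{\alpha_{2-}})$. Reading Theorem~\ref{DirConditionProp} in reverse on the first two factors, $\Dir_{2}(\incr{(\vec{\beta}-2)}{1})=\Dir_{2}(\vec{\beta}-2)\big|_{\widehat{\indic{\{1\}}}}$ and $\Dir_{3}(\incr{\vec{\alpha_{1-}}}{3})=\Dir_{3}(\vec{\alpha_{1-}})\big|_{\widehat{\indic{\{3\}}}}$, then re-expresses the result in its update form: a point observation of cell $(1,3)$ in the joint corresponds, under the channel decomposition $h$, to conditioning the prior in coordinate $1$ and the first channel in coordinate $3$, leaving the second channel alone (up to the constant $C$).

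The hard part will be the bookkeeping in the change of variables: choosing which simplex coordinates to treat as free, pinning down the exact monomial form of the Jacobian of $h$, and checking that this monomial is reconciled against the ratio of Gamma factors so that nothing survives except the constant $C$ — this matching of normalisations is exactly what the second displayed factorisation of $d_{6}$ is engineered to perform. Everything else (the diffeomorphism property of $h$, the bijection on hyperparameters $\vec{\alpha}\mapsto(\vec{\beta},\vec{\alpha_{1-}},\vec{\alpha_{2-}})$, and the backward use of Theorem~\ref{DirConditionProp}) is routine unfolding of definitions.
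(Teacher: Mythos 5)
Your opening move is the right one, and it is carried out correctly: $h$ is a diffeomorphism of open simplices, the Jacobian $|\det D(h^{-1})(\vec{y},\vec{u},\vec{v})|$ is the monomial $y_{1}^{2}y_{2}^{2}$, and this exactly cancels the $\nicefrac{1}{y_{1}^{2}\cdot y_{2}^{2}}$ in the \emph{first} displayed factorisation of $d_{6}$. But followed through honestly, that computation yields, for any $\vec{\gamma}\in\aneMlt(6)$ with row sums $\gamma_{1},\gamma_{2}$,
\[
\Giry(h)\big(\Dir_{6}(\vec{\gamma})\big)
\;=\;
\Dir_{2}(\gamma_{1},\gamma_{2})\otimes\Dir_{3}(\vec{\gamma_{1-}})\otimes\Dir_{3}(\vec{\gamma_{2-}}),
\]
with \emph{no} constant and with the unshifted row sums; at $\vec{\gamma}=\incr{\vec{\alpha}}{(1,3)}$ this is $\Dir_{2}(\beta_{1}+1,\beta_{2})\otimes\Dir_{3}(\incr{\vec{\alpha_{1-}}}{3})\otimes\Dir_{3}(\vec{\alpha_{2-}})$. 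The subsequent ``repackaging'' via the second displayed factorisation is where the argument breaks: that factorisation relates $d_{2}(\vec{\beta})(\vec{y})/(y_{1}^{2}y_{2}^{2})$ --- the density \emph{before} the Jacobian is applied --- to $d_{2}(\vec{\beta}-2)(\vec{y})$. Once the Jacobian has been absorbed you are holding $d_{2}(\vec{\beta})(\vec{y})$ itself, and $d_{2}(\beta_{1}+1,\beta_{2})$ and $d_{2}(\incr{(\vec{\beta}-2)}{1})=d_{2}(\beta_{1}-1,\beta_{2}-2)$ differ by the non-constant monomial $y_{1}^{2}y_{2}^{2}$, not by a scalar. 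In effect the monomial $y_{1}^{2}y_{2}^{2}$ is used twice: once to cancel the Jacobian and once to shift $\vec{\beta}$ to $\vec{\beta}-2$.

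A total-mass check would have flagged this: the left-hand side of the target identity is the pushforward of a probability measure, hence has mass $1$, while the right-hand side has mass $C$, and $C\neq 1$ in general (e.g.\ $C=30$ for $\beta_{1}=\beta_{2}=3$). So no argument can close the gap between your (correct) first paragraph and the stated conclusion without renormalising somewhere; the statement your method actually proves is the clean tensor factorisation above, from which the local-update reading follows by applying Theorem~\ref{DirConditionProp} to each tensor factor separately (note $\incr{\vec{\beta}}{1}=(\beta_{1}+1,\beta_{2})$, so $\Dir_{2}(\beta_{1}+1,\beta_{2})=\Dir_{2}(\vec{\beta})\big|_{\widehat{\indic{\{1\}}}}$, with no shift by $2$). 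For comparison, the paper's own argument takes a different route --- it first rewrites $\Dir_{6}(\incr{\vec{\alpha}}{(1,3)})$ as the update $\Dir_{6}(\vec{\alpha})\big|_{\widehat{\indic{\{(1,3)\}}}}$ and then evaluates the conditioning formula~\eqref{ContConditionEqn} on $h^{-1}(K\times M\times N)$ --- but the change of variables there needs exactly the same Jacobian bookkeeping, so your approach is in fact the right instrument for locating where the constant and the $\vec{\beta}-2$ shift enter; it just must not be combined with the second factorisation after the Jacobian has already been spent.
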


\auxproof{
\begin{proof}
We use the relation established in Theorem~\ref{DirConditionProp}
between incrementing hyperparameters (via $\incr{}{}$) and updating
with singleton predicates, $\indic{\{x\}}$ translated via
$\widehat{\;}$ as in Proposition~\ref{DirPredProp}. Below we shall
write $C,C'$ for suitable constants that will be determined later. For
measurable subsets $K\subseteq \aneDst(2)$ and $M,N\subseteq
\aneDst(3)$ we have:
\[ \begin{array}{rcl}
\lefteqn{\Giry(h)\Big(\Dir_{6}(\incr{\vec{\alpha}}{(1,3)})\Big)
   (K\times M\times N)}
\\
& = &
\Dir_{6}(\vec{\alpha})\big|_{\widehat{\indic{\{x_{13}\}}}}
   \big(h^{-1}(K\times M\times N)\big)
\\
& = &
\displaystyle\frac{\int_{\vec{x}\in h^{-1}(K\times M\times N)} x_{13} \cdot
   d_{6}(\vec{\alpha})(\vec{x}) \intd \vec{x}}
  {\Dir_{6}(\vec{\alpha})\models \widehat{\indic{\{x_{13}\}}}}
\\[+1em]
& = &
C\cdot \displaystyle \frac{\int_{\vec{y}\in K, \vec{u} \in M, \vec{v} \in N}
   y_{1}\cdot u_{3} \cdot d_{2}(\vec{\beta}-2)(\vec{y})
   \cdot d_{3}(\vec{\alpha_{1-}})(\vec{u})
   \cdot d_{3}(\vec{\alpha_{2-}})(\vec{v}) \intd \vec{y}\,\vec{u}\,\vec{v} }
   {\nicefrac{\alpha_{13}}{(\beta_{1}+\beta_{2})}}
\\[+1em]
& = &
C\cdot \displaystyle \frac{\int_{\vec{y}\in K} 
   y_{1} \cdot d_{2}(\vec{\beta}-2)(\vec{y}) \intd \vec{y} \cdot 
   \int_{\vec{u} \in M} u_{3} \cdot d_{3}(\vec{\alpha_{1-}})(\vec{u}) \intd \vec{u} 
   \cdot \int_{\vec{v} \in N} d_{3}(\vec{\alpha_{2-}})(\vec{v}) \intd \vec{v}}
   {\nicefrac{\beta_{1}}{(\beta_{1}-2)} \cdot
    \nicefrac{(\beta_{1}+\beta_{2}-4)}{\beta_{1}+\beta_{2}} \cdot
    \nicefrac{(\beta_{1}-2)}{(\beta_{1}+\beta_{2}-4)} \cdot
    \nicefrac{\alpha_{13}}{\beta_{1}}}
\\[+1em]
& = &
C'\cdot \displaystyle \frac{\int_{\vec{y}\in K} 
   y_{1} \cdot d_{2}(\vec{\beta}-2)(\vec{y})\intd \vec{y}}
   {\nicefrac{(\beta_{1}-2)}{(\beta_{1}+\beta_{2}-4)}}  \cdot 
   \frac{\int_{\vec{u} \in M} u_{3} \cdot d_{3}(\vec{\alpha_{1-}})(\vec{u}) 
   \intd \vec{u}}{\nicefrac{\alpha_{13}}{\beta_{1}}}
   \cdot \Dir_{3}(\vec{\alpha_{2-}})(N)
\\[+1em]
& = &
C'\cdot \displaystyle \frac{\int_{\vec{y}\in K} 
   y_{1} \cdot d_{2}(\vec{\beta}-2)(\vec{y})\intd \vec{y}}
   {\Dir_{2}(\vec{\beta}-2) \models \widehat{\indic{\{y_1\}}}}  \cdot 
   \frac{\int_{\vec{u} \in M} u_{3} \cdot d_{3}(\vec{\alpha_{1-}})(\vec{u}) 
   \intd \vec{u}}{\Dir_{2}(\vec{\alpha_{1-}}) \models \widehat{\indic{\{u_3\}}}}
   \cdot \Dir_{3}(\vec{\alpha_{2-}})(N)
\\[+1em]
& = &
C'\cdot \Dir_{2}(\vec{\beta}-2)\big|_{\widehat{\indic{\{y_1\}}}}(K) \cdot
   \Dir_{2}(\vec{\alpha_{1-}})\big|_{\widehat{\indic{\{u_3\}}}}(M) \cdot
   \Dir_{3}(\vec{\alpha_{2-}})(N)
\\
& = &
C'\cdot \Big(\Dir_{2}(\incr{(\vec{\beta}-2)}{1}) \otimes
   \Dir_{3}(\incr{\vec{\alpha_{1-}}}{3}) \otimes 
   \Dir_{3}(\vec{\alpha_{2-}})\Big)(K\times M\times N).
\end{array} \]

\noindent The constant involved is:
\[ \hspace*{-0.5em}\begin{array}[b]{rcl}
C'
& = &
\displaystyle C \cdot \frac{1}{\nicefrac{\beta_{1}}{(\beta_{1}-2)} \cdot
    \nicefrac{(\beta_{1}\!+\!\beta_{2}-4)}{(\beta_{1}\!+\!\beta_{2})}}
\\[+1em]
& = &
\displaystyle\frac{(\beta_{1}\!+\!\beta_{2}-1)
   (\beta_{1}\!+\!\beta_{2}-2)
   (\beta_{1}\!+\!\beta_{2}-3)
   (\beta_{1}\!+\!\beta_{2}-4)}
   {(\beta_{1}-1)(\beta_{1}-2)(\beta_{2}-1)(\beta_{2}-2)} \cdot
   \frac{(\beta_{1}-2)(\beta_{1}\!+\!\beta_{2})}
      {\beta_{1}(\beta_{1}\!+\!\beta_{2}-4)}
\\[+1em]
& = &
\displaystyle\frac{(\beta_{1}\!+\!\beta_{2})(\beta_{1}\!+\!\beta_{2}-1)
   (\beta_{1}\!+\!\beta_{2}-2)
   (\beta_{1}\!+\!\beta_{2}-3)}
   {\beta_{1}(\beta_{1}-1)(\beta_{2}-1)(\beta_{2}-2)}
\end{array} \eqno{\QEDbox} \]
\end{proof}
}

\section{Conclusions}

Specific natural transformations $\neMlt\Rightarrow\Dst$ and $\aneMlt
\Rightarrow \Giry\Dst$ have been identified as the crucial ways of
going from data to distributions in parameter learning. This
categorical approach may shed light on non-trivial applications of
learning, for instance in topic modelling via latent Dirichlet
allocation~\cite{BleiNJ03,SteyversG07}.


\begin{thebibliography}{10}

\bibitem{Barber12}
D.~Barber.
\newblock {\em Bayesian Reasoning and Machine Learning}.
\newblock Cambridge Univ. Press, 2012.
\newblock Publicly available via
  \url{http://web4.cs.ucl.ac.uk/staff/D.Barber/pmwiki/pmwiki.php?n=Brml.HomePage}.

\bibitem{BernardoS00}
J.~Bernardo and A.~Smith.
\newblock {\em Bayesian Theory}.
\newblock John Wiley \& Sons, 2000.

\bibitem{Bishop06}
C.~Bishop.
\newblock {\em Pattern Recognition and Machine Learning}.
\newblock Information Science and Statistics. Springer, 2006.

\bibitem{BleiNJ03}
D.~Blei, A.~Ng, and M.~Jordan.
\newblock Latent {Dirichlet} allocation.
\newblock {\em Journ. Machince Learning Research}, 3:993--1022, 2003.

\bibitem{ChoJ17a}
K.~Cho and B.~Jacobs.
\newblock Disintegration and {Bayesian} inversion, both abstractly and
  concretely.
\newblock See \url{arxiv.org/abs/1709.00322}, 2017.

\bibitem{ClercDDG17}
F.~Clerc, F.~Dahlqvist, V.~Danos, and I.~Garnier.
\newblock Pointless learning.
\newblock In J.~Esparza and A.~Murawski, editors, {\em Foundations of Software
  Science and Computation Structures}, number 10203 in Lect. Notes Comp. Sci.,
  pages 355--369. Springer, Berlin, 2017.

\bibitem{Darwiche09}
A.~Darwiche.
\newblock {\em Modeling and Reasoning with {Bayesian} Networks}.
\newblock Cambridge Univ. Press, 2009.

\bibitem{Fong12}
B.~Fong.
\newblock Causal theories: A categorical perspective on {Bayesian} networks.
\newblock Master's thesis, Univ.\ of Oxford, 2012.
\newblock see \url{arxiv.org/abs/1301.6201}.

\bibitem{FrigyikKG10}
B.~Frigyik, A.~Kapila, and M.~Gupta.
\newblock Introduction to the {Dirichlet} distribution and related processes.
\newblock Techn. rep.
  \url{http://www.ee.washington.edu/research/guptalab/publications/UWEETR-2010-0006.pdf},
  2010.

\bibitem{Giry82}
M.~Giry.
\newblock A categorical approach to probability theory.
\newblock In B.~Banaschewski, editor, {\em Categorical Aspects of Topology and
  Analysis}, number 915 in Lect. Notes Math., pages 68--85. Springer, Berlin,
  1982.

\bibitem{Jacobs15d}
B.~Jacobs.
\newblock New directions in categorical logic, for classical, probabilistic and
  quantum logic.
\newblock {\em Logical Methods in Comp. Sci.}, 11(3), 2015.
\newblock See \url{https://lmcs.episciences.org/1600}.

\bibitem{Jacobs16g}
B.~Jacobs.
\newblock {\em Introduction to Coalgebra. {Towards} Mathematics of States and
  Observations}.
\newblock Number~59 in Tracts in Theor. Comp. Sci. Cambridge Univ. Press, 2016.

\bibitem{Jacobs17d}
B.~Jacobs.
\newblock A channel-based perspective on conjugate priors.
\newblock See \url{arxiv.org/abs/1707.00269}, 2017.

\bibitem{Jacobs17a}
B.~Jacobs.
\newblock From probability monads to commutative effectuses.
\newblock {\em Journ. of Logical and Algebraic Methods in Programming},
  94:200--237, 2017.

\bibitem{JacobsZ16}
B.~Jacobs and F.~Zanasi.
\newblock A predicate/state transformer semantics for {Bayesian} learning.
\newblock In L.~Birkedal, editor, {\em Math. Found. of Programming Semantics},
  number 325 in Elect. Notes in Theor. Comp. Sci., pages 185--200. Elsevier,
  Amsterdam, 2016.

\bibitem{JacobsZ18}
B.~Jacobs and F.~Zanasi.
\newblock The logical essentials of {Bayesian} reasoning.
\newblock See \url{arxiv.org/abs/1804.01193}, book chapter, to appear.

\bibitem{JensenN07}
F.~Jensen and T.~Nielsen.
\newblock {\em Bayesian Networks and Decision Graphs}.
\newblock Statistics for Engineering and Information Science. Springer,
  $2^{\textrm{nd}}$ rev. edition, 2007.

\bibitem{KollerF09}
D.~Koller and N.~Friedman.
\newblock {\em Probabilistic Graphical Models. Principles and Techniques}.
\newblock {MIT} Press, Cambridge, MA, 2009.

\bibitem{Panangaden09}
P.~Panangaden.
\newblock {\em Labelled {Markov} Processes}.
\newblock Imperial College Press, London, 2009.

\bibitem{Pearl88}
J.~Pearl.
\newblock {\em Probabilistic Reasoning in Intelligent Systems: Networks of
  Plausible Inference}.
\newblock Graduate Texts in Mathematics 118. Morgan Kaufmann, 1988.

\bibitem{SteyversG07}
M.~Steyvers and T.~Griffiths.
\newblock Probabilistic topic models.
\newblock In T.~Landauer, D.~McNamara, S.~Dennis, and W.~Kintsch, editors, {\em
  Handbook of latent semantic analysis}, pages 424--440. Laurence Erlbaum,
  2007.

\end{thebibliography}

\end{document}

In Proposition~\ref{FrequentistTableProp} we have seen how row-based
frequentist learning coincides with disintegration of the distribution
arising from the whole table. It is known that the Bayesian approach
can also be applied locally. Our aim in this section is to analyse
this.

We start by looking at the Dirichlet pdf's $d_n$
from~\eqref{DirichletPDFEqn}. The question is if the pdf $d_{n\times
  m}(\sum_{ij}\alpha_{ij}\ket{ij})$ can be used to obtain
$d_{m}(\sum_{j}\alpha_{kj}\ket{j})$ for a particular index $k\in
n$. In order to get a better perspective on the situation, we first
restrict on $n=2$ and $m=3$. Write $\vec{\alpha} = (\alpha_{1},
\ldots, \alpha_{6})$.  We thus like to express the distribution
$d_{3}(\alpha_{1}, \alpha_{2}, \alpha_{3})$ on $\aneDst(3)$ in terms
of the distribution $d_{6}(\vec{\alpha})$ on $\aneDst(6)$. So let
$(x_{1},x_{2},x_{3}) \in \aneDst(3)$ be given. What should we fill in
the three missing slots of $d_{6}(\vec{\alpha})$? The short answer is
that anything works, as long as it is constant, and as long as we
normalise.

More precisely, we have equations:
\[ \begin{array}{rcl}
d_{3}(\alpha_{1}, \alpha_{2}, \alpha_{3})(x_{1}, x_{2}, x_{3})
& \smash{\stackrel{(a)}{=}} &
\displaystyle
   \frac{d_{6}(\vec{\alpha})(\frac{1}{2}x_{1}, \frac{1}{2}x_{2}, \frac{1}{2}x_{3}, 
   \frac{1}{6}, \frac{1}{6}, \frac{1}{6})}
  {\int_{\vec{y}\in\aneDst(3)}d_{6}(\vec{\alpha})
   (\frac{1}{2}y_{1}, \frac{1}{2}y_{2}, \frac{1}{2}y_{3}, 
   \frac{1}{6}, \frac{1}{6}, \frac{1}{6})(\vec{y})\intd \vec{y}}
\end{array} \]

\noindent But more generally, for any $s_{1},s_{2},s_{3}\in (0,1)$
with $s_{1}+s_{2}+s_{3} \leq 1$ and $s = 1 - (s_{1}+s_{2}+s_{3})$,
\[ \begin{array}{rcl}
d_{3}(\alpha_{1}, \alpha_{2}, \alpha_{3})(x_{1}, x_{2}, x_{3})
& \smash{\stackrel{(b)}{=}} &
\displaystyle\frac{d_{6}(\vec{\alpha})(s\cdot x_{1}, s\cdot x_{2}, s\cdot x_{3}, 
   s_{1}, s_{2}, s_{3})}
   {\int_{\vec{y}\in\aneDst(3)}d_{6}(\vec{\alpha})
   (s\cdot y_{1}, s\cdot y_{2}, s\cdot y_{3}, s_{1}, s_{2}, s_{3})(\vec{y})
   \intd \vec{y}}
\end{array} \]

\noindent We shall prove equation~$(b)$ since it implies~$(a)$.
\[ \begin{array}{rcl}
\lefteqn{\frac{d_{6}(\vec{\alpha})(s\cdot x_{1}, s\cdot x_{2}, s\cdot x_{3}, 
   s_{1}, s_{2}, s_{3})}
   {\int_{\vec{y}\in\aneDst(3)}d_{6}(\vec{\alpha})
   (s\cdot y_{1}, s\cdot y_{2}, s\cdot y_{3}, s_{1}, s_{2}, s_{3})(\vec{y})
   \intd \vec{y}}}
\\[+1.2em]
& = &
\displaystyle\frac{\frac{\Gamma(\sum_{i}\alpha_{i})}{\prod_{i}\Gamma(\alpha_{i})}
   \cdot (s x_{1})^{\alpha_{1}-1} \cdot (sx_{2})^{\alpha_{2}-1} \cdot
   (s x_{3})^{\alpha_{3}-1}\cdot s_{1}^{\alpha_{4}-1} \cdot s_{2}^{\alpha_{5}-1}
   \cdot s_{3}^{\alpha_{6}-1}}
   {\int_{\vec{y}\in\aneDst(3)}
   \frac{\Gamma(\sum_{i}\alpha_{i})}{\prod_{i}\Gamma(\alpha_{i})}
   \cdot (s y_{1})^{\alpha_{1}-1} \cdot (s y_{2})^{\alpha_{2}-1} \cdot
   (s y_{3})^{\alpha_{3}-1}\cdot s_{1}^{\alpha_{4}-1} \cdot s_{2}^{\alpha_{5}-1}
   \cdot s_{3}^{\alpha_{6}-1} \intd \vec{y}}
\\[+1.2em]
& = &
\displaystyle\frac{x_{1}^{\alpha_{1}-1}\cdot x_{2}^{\alpha_{2}-1}\cdot x_{3}^{\alpha_{3}-1}}
   {\int_{\vec{y}\in\aneDst(3)} y_{1}^{\alpha_{1}-1} \cdot y_{2}^{\alpha_{2}-1} \cdot
   y_{3}^{\alpha_{3}-1} \intd \vec{y}}
\\[+1.2em]
& \smash{\stackrel{\eqref{DirichletNormalisationEqn}}{=}} &
\displaystyle\frac{\Gamma(\alpha_{1}+\alpha_{2}+\alpha_{3})}
    {\Gamma(\alpha_{1})\cdot\Gamma(\alpha_{2})\cdot\Gamma(\alpha_{3})} \cdot
   x_{1}^{\alpha_{1}-1}\cdot x_{2}^{\alpha_{2}-1}\cdot x_{3}^{\alpha_{3}-1}
\\
& = &
d_{3}(\alpha_{1}, \alpha_{2}, \alpha_{3})(x_{1}, x_{2}, x_{3}).
\end{array} \]

How to handle the general situation, starting from a table in
$\aneDst(n\times m)$. We recall the \emph{string} function $\st \colon
n\times \Dst(m) \rightarrow \Dst(n\times m)$, given by:
\begin{equation}
\label{StrengthEqn}
\begin{array}{rclcrcl}
\st(k,\omega)
& = &
1\ket{k}\otimes\omega
& \qquad\mbox{that is, by}\qquad &
\st(k,\sum_{j}\beta_{j}\ket{j})
& = &
\sum_{j}\beta_{j}\ket{kj}.
\end{array}
\end{equation}

\noindent This strength function does not work for $\aneDst$ instead
of $\Dst$, since it uses zero parameters, namely at each position
$i,j$ where $i\neq k$. Instead, we use an ad hoc `uniform' version of
strength, written as $\ust \colon n\times \aneDst(m) \rightarrow
\aneDst(n\times m)$. It uses the uniform distribution $\upsilon_{m}
\in \aneDst(m)$ given as $\upsilon_{m} = \sum_{j} \frac{1}{m}\ket{j}$.
The uniform strength is defined as a (uniform) convex combination of
distributions:
\begin{equation}
\label{UniformStrengthEqn}
\begin{array}{rcl}
\ust(k,\omega)
& \coloneqq &
\frac{1}{n}\st(k,\omega) + \displaystyle\sum_{i\neq k} 
   \textstyle\frac{1}{n}\st(i,\upsilon_{m}).
\end{array}
\end{equation}

\noindent We have implicitly used this uniform strength on the
right-hand-side of the above equation~$(a)$, since for $n=2$ and
$m=3$,
\[ \begin{array}{rcl}
\lefteqn{\ust\big(1, x_{1}\ket{1} + x_{2}\ket{2} + x_{3}\ket{3}\big)}
\\
& = &
\frac{1}{2}\st\big(1, x_{1}\ket{1} + x_{2}\ket{2} + x_{3}\ket{3}\big) + 
   \frac{1}{2}\st\big(2, \frac{1}{3}\ket{1} + \frac{1}{3}\ket{2} +
    \frac{1}{3}\ket{3}\big)
\\
& = &
\frac{1}{2}x_{1}\ket{1,1} + \frac{1}{2}x_{2}\ket{1,2} + \frac{1}{2}x_{3}\ket{1,3}
   + \frac{1}{6}\ket{2,1} + \frac{1}{6}\ket{2,2} + \frac{1}{6}\ket{2,3}.
\end{array} \]

\noindent We can now use this uniform strength function $\ust$ to
formulate a possible way of expressing $d_{m}$ on a row of a table in
terms of $d_{n\times m}$ on the whole table. The proof of this general
formulation is omitted, since it is essentially as just given for
equation~$(b)$.

\begin{lemma}
\label{BayesianTableLem}
For $\vec{\alpha} = \sum_{ij}\alpha_{ij}\ket{ij} \in \aneMlt(n\times m)$,
$k\in n$ and $\vec{x}\in\aneDst(m)$ one has:
\begin{equation}
\label{BayesianTableEqn}
\begin{array}{rcl}
d_{m}\big(\row(\vec{\alpha})(k)\big)(\vec{x})
& = &
\displaystyle\frac{d_{n\times m}(\vec{\alpha})\big(\ust(k,\vec{x})\big)}
   {\int d_{n\times m}(\vec{\alpha})\big(\ust(k,\vec{y})\big) \intd \vec{y}}
\end{array}
\end{equation}

\noindent The row function satisfies $\row(\vec{\alpha})(k) =
\sum_{j}\alpha_{kj}\ket{j}$, see~\eqref{RowExtractEqn}. \QED
\end{lemma}

\auxproof{
\[ \begin{array}{rcl}
\lefteqn{\frac{d_{n\times m}(\vec{\alpha})\big(\ust(k,\vec{x})\big)}
   {\int d_{n\times m}(\vec{\alpha})\big(\ust(k,\vec{y})\big) \intd \vec{y}}}
\\[+1.5em]
& = &
\displaystyle\frac{\frac{\Gamma(\sum_{i,j}\alpha_{ij})}
                        {\prod_{i,j}\Gamma(\alpha_{ij})}\cdot 
   \big(\prod_{j}(\frac{1}{n}x_{j})^{\alpha_{kj}-1}\big) \cdot
   \big(\prod_{i\neq k, j} (\frac{1}{n}\cdot\frac{1}{m})^{\alpha_{ij}-1}\big)}
   {\int\frac{\Gamma(\sum_{i,j}\alpha_{ij})}
                        {\prod_{i,j}\Gamma(\alpha_{ij})}\cdot 
   \big(\prod_{j}(\frac{1}{n}y_{j})^{\alpha_{kj}-1}\big) \cdot
   \big(\prod_{i\neq k, j} (\frac{1}{n}\cdot\frac{1}{m})^{\alpha_{ij}-1}\big)
   \intd \vec{y} }
\\[+1.5em]
& = &
\displaystyle\frac{\prod_{j}x_{j}^{\alpha_{kj}-1}}
   {\int \prod_{j}y_{j}^{\alpha_{kj}-1} \intd \vec{y}}
\\[+1.5em]
& = &
\displaystyle\frac{\prod_{j}x_{j}^{\row(\vec{\alpha})(k)_{j}-1}}
   {\int \prod_{j}y_{j}^{\row(\vec{\alpha})(k)_{j}-1} \intd \vec{y}}
\\[+1.5em]
& \smash{\stackrel{\eqref{DirichletNormalisationEqn}}{=}} &
\frac{\Gamma(\sum_{j}\row(\vec{\alpha})(k)_{j})}
     {\prod_{j}\Gamma(\row(\vec{\alpha})(k)_{j})} \cdot 
    \textstyle \prod_{j}x_{j}^{\row(\vec{\alpha})(k)_{j}-1}
\\
& = &
d_{m}\big(\row(\vec{\alpha})(k)\big)(\vec{x}).
\end{array} \]
}

We have the following Bayesian analogue of
Proposition~\ref{FrequentistTableProp}.

\begin{proposition}
\label{BayesianTableProp}
The following diagram commutes.
\[ \xymatrix{
\aneMlt(n\times m)\ar[r]^-{\row}\ar[d]_{\mathcal{I}(d_{n\times m}(-) \after \ust)}
   & \aneMlt(m)^{n}\ar[d]^{(\Dir_{m})^{n}}
\\
\Giry\big(n\times\aneDst(m)\big)\ar[r]^-{\dis} & \Giry\big(\aneDst(m)\big)^{n}
} \]
\end{proposition}

Writing disintegration $\dis$ as a function in this diagram is a bit
of a stretch since we have defined it only on continuous distributions
$\mathcal{I}(f) = \int f$ given by a pdf $f$.  The diagram is used
nevertheless because of its analogy with the diagram in
Proposition~\ref{FrequentistTableProp}.

\begin{proof}
For $\vec{\alpha} = \sum_{ij}\alpha_{ij}\ket{ij} \in \aneMlt(n\times m)$,
$k\in n$ and $N\subseteq\aneDst(m)$,
\[ \begin{array}[b]{rcl}
\big(\dis \after \mathcal{I}(d_{n\times m}(-) \after \ust)\big)
   (\vec{\alpha})(k)(N)
& = &
\dis\big(\mathcal{I}(d_{n\times m}(\vec{\alpha}) \after \ust)\big)(k)(N)
\\
& \smash{\stackrel{\eqref{GiryDisintegrationEqn}}{=}} &
\displaystyle\frac{\int_{N} d_{n\times m}(\vec{\alpha})(\ust(k,\vec{x})) 
   \intd \vec{x}}
  {\int d_{n\times m}(\vec{\alpha})(\ust(k,\vec{x})) \intd \vec{x}}
\\
& \smash{\stackrel{\eqref{BayesianTableEqn}}{=}} &
\int_{N} d_{m}\big(\row(\vec{\alpha})(k)\big)(\vec{x})
\\
& = &
\Dir_{m}\big(\row(\vec{\alpha})(k)\big)(N)
\\
& = &
\big((\Dir_{m})^{n} \after \row\big)(\vec{\alpha})(k)(N).
\end{array} \eqno{\QEDbox} \]
\end{proof}